\documentclass[dvipsnames]{article} %
\usepackage{iclr2026_conference,times}

\usepackage{amsmath,amsfonts,bm}

\def\eqref#1{equation~\ref{#1}}

\def\1{\bm{1}}

\DeclareMathAlphabet{\mathsfit}{\encodingdefault}{\sfdefault}{m}{sl}
\SetMathAlphabet{\mathsfit}{bold}{\encodingdefault}{\sfdefault}{bx}{n}

\newcommand{\E}{\mathbb{E}}

\newcommand{\Var}{\mathrm{Var}}

\newcommand{\Cov}{\mathrm{Cov}}

\usepackage{wrapfig}
\usepackage{natbib}
\usepackage{xcolor}
\usepackage{hyperref}
\usepackage{url}
\usepackage{graphicx}
\usepackage{amsthm}
\usepackage{amssymb}
\usepackage{tcolorbox}
\usepackage{enumitem}
\usepackage{subcaption}
\usepackage{dsfont}
\usepackage{booktabs}
\usepackage{listings}
\usepackage{makecell} %
\usepackage{subcaption}
\usepackage{caption}
\usepackage{siunitx}
\usepackage[normalem]{ulem}
\usepackage{textcomp} 

\usepackage{xcolor}
\definecolor{codebg}{RGB}{248,248,248}
\usepackage[cache=false]{minted} %
\usepackage{booktabs}
\usepackage{siunitx}

\renewcommand{\cite}{\citep}
\sisetup{
  round-mode          = places,
  round-precision     = 1,
  table-align-uncertainty = true,
  separate-uncertainty    = true
}
\title{Aligning Diffusion Language Models via Unpaired Preference Optimization}

\makeatletter
\renewcommand*{\thefootnote}{\fnsymbol{footnote}} 
\makeatother

\author{\footnotesize \normalfont
  \begin{tabular}[t]{@{}l@{}}
  \textbf{Vaibhav Jindal}\textsuperscript{1\,\textdagger}\quad
  \textbf{Hejian Sang}\textsuperscript{1}\quad
  \textbf{Chun-Mao Lai}\textsuperscript{3}\quad
  \textbf{Yanning Chen}\textsuperscript{1}\quad
  \textbf{Zhipeng Wang}\textsuperscript{1}
  \end{tabular}
  \\[-1pt]
  \footnotesize \textsuperscript{1} LinkedIn Corporation, CA, USA \quad
  \footnotesize \textsuperscript{3} University of California San Diego, CA, USA
}

\iclrfinalcopy
\begin{document}

\maketitle

\begingroup
\renewcommand\thefootnote{\fnsymbol{footnote}}
\setcounter{footnote}{0}%
\footnotetext[2]{\mbox{Corresponding author: Vaibhav Jindal \textless{}\texttt{vjindal@linkedin.com}\textgreater{}.}}
\endgroup

\begin{abstract}
  Diffusion language models (dLLMs) are an emerging alternative to autoregressive (AR) generators, but aligning them to human preferences is challenging because sequence log-likelihoods are intractable and pairwise preference data are costly to collect. We introduce ELBO-KTO, which combines an ELBO surrogate for diffusion log-likelihoods with a prospect-theoretic, unpaired preference objective (Kahneman–Tversky Optimization, KTO). We analyze the bias and variance induced by the ELBO substitution and employ variance-reduction practices that stabilize gradients during training. Applied to LLaDA-8B-Instruct, ELBO-KTO yields \textbf{65.9\%} and \textbf{62.3\%} adjusted win rates on kto-mix-14k and UltraFeedback-Binary, respectively, versus the base model under an automatic LLM judge. Across downstream tasks, including GSM8K, MMLU, and additional reasoning/knowledge benchmarks, ELBO-KTO trained on UltraFeedback-Binary performs on par with or better than the base model under identical decoding. This establishes unpaired preference optimization as a viable alternative to pairwise alignment in diffusion LLMs. We release our implementation at \href{https://github.com/vaibhavjindal/elbo-kto}{https://github.com/vaibhavjindal/elbo-kto}.
\end{abstract}

\section{INTRODUCTION}

Aligning large language models (LLMs) with human preferences is central to building helpful systems. In autoregressive LLMs, methods such as Reinforcement learning
from human feedback (RLHF) \citep{christiano2017deep, ouyang2022training, stiennon2020learning, schulman2017proximal} and Direct Preference Optimization (DPO) \citep{rafailov2023direct} and its variants \citep{meng2024simpo,yuan2023rrhf, azar2024general} are standard and rely on curated \emph{paired} comparisons for each prompt. Diffusion-style large language models (dLLMs) extend text generation beyond left-to-right decoding by iteratively refining a sequence in parallel \citep{ye2025dream, ou2024your, lou2310discrete, nie2025largelanguagediffusionmodels, khanna2025mercury}  which makes sentence-level likelihoods intractable and complicates preference optimization.


Recent approaches to dLLMs have replaced log-likelihoods with ELBO-based surrogates and incorporated variance reduction techniques to stabilize training, but remain constrained by the assumption of paired preference data \citep{zhu2025llada15variancereducedpreference}. This assumption limits scalability and applicability in practical settings where feedback is abundant but inherently unpaired—as in binary “good/bad” user ratings or safety filter signals.

In this work, we take a fundamentally different direction by proposing ELBO-KTO as a framework to train dLLMs directly from unpaired preference signals. Our approach integrates ELBO estimates into KTO’s value computation via an ELBO margin (policy vs.\ reference) and a baseline for variance control. This formulation unlocks learning from a broader class of real-world “good/bad” feedback signals while preserving training stability through lightweight variance reduction techniques.

The main contributions are summarized as follows:
\begin{itemize}
    \item We present ELBO-KTO, a principled preference optimization framework that enables alignment of diffusion language models (dLLMs) from unpaired feedback. 
    \item We provide a theoretical analysis of the bias and variance tradeoff introduced by ELBO substitution, and prove that our estimator enjoys bounded bias with controlled variance.
    \item We empirically validate our proposed method ELBO-KTO on kto-mix-14k \citep{trl_kto_mix_14k} and UltraFeedback-Binary \citep{cui2023ultrafeedback}, where it achieves \textbf{65.9\%} and \textbf{62.3\%} adjusted win rates over the base model, demonstrating consistent improvements. Extensive analysis confirms that our method provides a simple yet effective way to align dLLMs with stronger models. On downstream reasoning/knowledge tasks such as GSM8K and MMLU, ELBO-KTO trained on UltraFeedback-Binary performs on par with or better than the LLaDA-8B-Instruct.

\end{itemize}

\section{PRELIMINARIES}

\subsection{Diffusion Large Language Models and LLaDA}

LLaDA \citep{nie2025largelanguagediffusionmodels} is an 8B-parameter masked diffusion model (MDM) pretrained on 2.3T tokens (LLaDA-8B-Base) and fine-tuned on 4.5M instruction pairs (LLaDA-8B-Instruct). LLaDA achieves performance competitive with leading autoregressive LLMs, demonstrating strong scalability, in-context learning, and instruction-following capabilities. In this work, we adopt LLaDA-8B-Instruct as our base model for studying alignment in diffusion LLMs using the KTO algorithm.

\subsection{VRPO: Variance Reduced Preference Optimization for dLLMs}

While Direct Preference Optimization (DPO) operates on sentence-level log-likelihoods, such quantities are intractable in masked diffusion LMs. Variance-Reduced Preference Optimization (VRPO) addresses this by replacing log-likelihoods with ELBO surrogates that decompose over diffusion steps \citep{zhu2025llada15variancereducedpreference}.

\paragraph{ELBO.}
Given a prompt $x$, and a corresponding full response $y$, we draw a noised intermediate $y_t$ from the forward process $q(y_t \mid t,y,x)$ at a randomly sampled diffusion timestep $t \sim \mathcal{U}[0,1]$ and evaluate a per-step loss $\ell_\pi$. Averaging over $t$ and $y_t$ yields the ELBO
\begin{equation}
\mathcal{B}_{\pi}(y \mid x)
= \mathbb{E}_{t\sim\mathcal{U}[0,1]}\ \mathbb{E}_{y_t \sim q(\cdot \mid t,y,x)}\!\big[\ell_\pi(y_t, t, y \mid x)\big],
\label{eq:vrpo-elbo}
\end{equation}
where $\ell_\pi$ is the per-step loss of the mask prediction model as defined in Appendix A. $\mathcal{B}_{\pi}$ is a lower-bound of $\log \pi_\theta(y \mid x)$ \citep{lou2310discrete,ou2024your}.

\paragraph{Monte Carlo Estimator.}
Sampling $n_t$ time steps, $\{t_j\}_{j=1}^{n_t} \sim \mathcal{U}[0,1]$, and $n_{y_t}$ draws per timestep, $\{y^{(i)}_{t_j}\}_{i=1}^{n_{y_t}} \sim q(y_{t_j}\mid t_j,y,x)$,
\begin{equation}
\widehat{\mathcal{B}}_{\pi}(y\mid x)
= \frac{1}{n_t}\sum_{j=1}^{n_t}\frac{1}{n_{y_t}}\sum_{i=1}^{n_{y_t}} \ell_\pi\!\big(y^{(i)}_{t_j}, t_j, y\mid x\big)
\label{eq:B-hat}
\end{equation}
is the Monte Carlo Estimator of $\mathcal{B}_{\pi}$.

VRPO introduces techniques to reduce the variance of this estimator by (i) increasing the sampling budget, (ii) allocating more samples across diffusion steps by increasing $n_t$, and (iii) using common random numbers shared between policy and reference. These techniques help stabilize training and reduce the variance of loss and gradient.

\paragraph{Reference-Adjusted ELBO margin.}
Given a desirable completion $y_w$ and a undesirable completion $y_l$, VRPO compares their \emph{policy-vs-reference} ELBO differences:
\begin{equation}
\begin{aligned}
\widehat{m}_\theta(x;y_w,y_l)
&= \Big[\widehat{\mathcal{B}}_{\pi_\theta}(y_w \mid x)-\widehat{\mathcal{B}}_{\pi_\theta}(y_l \mid x)\Big] \\
&\quad - \Big[\widehat{\mathcal{B}}_{\pi_\mathrm{ref}}(y_w \mid x)-\widehat{\mathcal{B}}_{\pi_\mathrm{ref}}(y_l \mid x)\Big].
\end{aligned}
\label{eq:vrpo-margin}
\end{equation}

\paragraph{VRPO Loss.}

Finally, the DPO-style logistic loss over the margin is
\begin{equation}\label{eq:vrpo-loss}
\mathcal{L}_{\mathrm{VRPO}}(\theta)
= \mathbb{E}_{(x,y_w,y_l)\sim\mathcal{D}}
\bigl[-\log \sigma\bigl(\beta\,\widehat{m}_\theta(x;y_w,y_l)\bigr)\bigr]
\end{equation}
\noindent where $\sigma(\cdot)$ is the sigmoid and $\beta>0$ is a temperature. This loss encourages the policy to assign a relatively higher probability mass to $y_w$ than to $y_l$ under ELBO surrogates, thus teaching the preference to the model.

\subsection{Kahneman–Tversky Optimization (KTO)}

KTO \citep{ethayarajh2024ktomodelalignmentprospect} is a preference optimization algorithm inspired by prospect theory in behavioral economics, which models human perception of gains and losses as asymmetric. Unlike DPO, which relies on paired preference data, KTO can learn directly from unpaired binary signals (“desirable” vs.~“undesirable”), making it well-suited for settings with limited or imbalanced feedback.

Let $\pi_\theta$ be the policy, $\pi_{\mathrm{ref}}$ be a frozen reference, and $\sigma(\cdot)$ the sigmoid function. KTO defines the \emph{reference–adjusted reward} as
\begin{equation}
    r_\theta(x,y) \;=\; \log\frac{\pi_\theta(y\mid x)}{\pi_{\mathrm{ref}}(y\mid x)} .
\end{equation}
This reward is mapped through a prospect-theoretic \emph{value function} with separate loss-aversion weights for desirable and undesirable examples:
\begin{equation}
    v(x,y) =
\begin{cases}
\lambda_D\sigma(\beta \,(r_\theta(x,y)-z_0)), & y \sim y_{\text{desirable}}|x, \\[3pt]
\lambda_U\sigma(\beta \,(z_0-r_\theta(x,y))), & y \sim y_{\text{undesirable}}|x .
\end{cases}
\label{eq:kto-value}
\end{equation}
The baseline
\[
z_0=\text{KL}(\pi_{\theta}(\cdot|x)\,\|\,\pi_{\mathrm{ref}}(\cdot|x))
\]
acts as a reference point to control loss saturation. Here, $\beta>0$ controls risk aversion, while $\lambda_D$ and $\lambda_U$ govern asymmetric loss aversion for desirable and undesirable samples.

Assigning $\lambda_y=\lambda_D$ for desirable and $\lambda_y=\lambda_U$ for undesirable samples, the KTO objective is
\begin{equation}
    \mathcal{L}_{\mathrm{KTO}}(\theta) = \mathbb{E}_{(x,y)\sim\mathcal{D}}[\lambda_y - v(x,y)] .
\end{equation}
Intuitively, desirable examples increase utility when the policy’s log-likelihood exceeds that of reference, while undesirable ones increase utility when it falls below the reference. The term $\lambda_y$ ensures non-negativity. In practice, gradients are only taken with respect to $\pi_\theta$, while $\pi_{\mathrm{ref}}$ and $z_0$ are treated as stop-gradient terms, and we don't backpropagte through them.

\section{METHOD}
\label{sec:kto-dllm}
\subsection{ELBO-KTO for dLLMs}
KTO scores a response by the log-likelihood ratio
$\log\frac{\pi_\theta(y\mid x)}{\pi_{\mathrm{ref}}(y\mid x)}$,
which is intractable for diffusion LMs. To solve for this, we replace each
log-likelihood with a Monte Carlo ELBO lower bound and work with the
\emph{ELBO margin}
\begin{equation}
\widehat{r}_\theta(x,y)\;=\;\widehat{\mathcal{B}}_{\pi_\theta}(y\mid x)\;-\;\widehat{\mathcal{B}}_{\pi_\mathrm{ref}}(y\mid x),
\label{eq:elbo-margin}
\end{equation}
where $\widehat{\mathcal{B}}_{\pi}$ is defined in (\ref{eq:B-hat}) as 
\begin{equation*}
    \widehat{\mathcal{B}}_{\pi}(y\mid x)
= \frac{1}{n_t}\sum_{j=1}^{n_t}\frac{1}{n_{y_t}}\sum_{i=1}^{n_{y_t}} \ell_\pi\!\big(y^{(i)}_{t_j}, t_j, y\mid x\big),
\end{equation*}
and $\ell_\pi$ represents the per-step loss of the mask prediction model, whose exact formulation is defined in Appendix A. 

\subsection{Global Per-Batch Baseline for Variance Control}

Classical KTO introduces a per-prompt reference point
$$z_0(x)=\mathbb{E}_{y'\sim \pi_\theta(\cdot\mid x)}[\log\frac{\pi_\theta(y'\mid x)}{\pi_{\mathrm{ref}}(y'\mid x)}],$$ i.e. a KL term. This reference is intractable for dLLMs and estimating it with Monte Carlo methods is prohibitively expensive. 
Thus, for computational efficiency and stability, we use a \emph{single scalar
global baseline} computed per mini-batch $S$
\begin{equation}
\hat{b}_0(S)\;=\;\frac{1}{m}\sum_{i=1}^m \widehat{r}_\theta(x_i,y_i),
\qquad S=\{(x_i,y_i)\}_{i=1}^m,
\label{eq:global-baseline}
\end{equation}
and treat $\hat{b}_0(S)$ as stop-gradient, i.e., we do not backpropagate through it. This is a standard control variate which recenters the scores entering the KTO sigmoid function. It reduces the gradient variance without any
additional ELBO evaluations or KL baseline calculations using mismatched pairs as done in \citet{ethayarajh2024ktomodelalignmentprospect}.

\subsection{Instantiated Loss for Diffusion Language Models}

Let $s_i\in\{+1,-1\}$ encode desirable/undesirable and
$g(u)=\sigma(\beta u)$ be the logistic link, where $\beta$ controls the KTO risk aversion. We instantiate the general KTO
Loss for a mini-batch $S$ with the MC ELBO margin
(\ref{eq:elbo-margin}) and the global baseline (\ref{eq:global-baseline}):
\begin{equation}
\widehat{L}(S)
\;=\;\frac{1}{m}\sum_{i=1}^m
\lambda_{s_i}\;\Big(1- g(s_i[\widehat{r}_\theta(x_i,y_i)-\hat{b}_0(S)])\Big ),
\label{eq:gb-kto-elbo}
\end{equation}
where only the $\widehat{\mathcal{B}}_{\pi_\theta}$ term backpropagates; the reference term, $\widehat{\mathcal{B}}_{\pi_\mathrm{ref}}$ and
$\hat{b}_0(S)$ are treated as constants. This technique centers the scores
around the batch mean to improve stability and efficiency while preserving the ELBO-difference
structure needed for diffusion LLMs.

Subtracting a constant baseline minimizes the variance of the centered scores;
with logistic $g$, this keeps $s_i[\widehat{r}_\theta- \hat{b}_0]$ near the
high slope region, avoiding loss saturation and reducing the variance of the gradient.
Using the batch mean (\ref{eq:global-baseline}) gives this benefit without additional compute. A detailed bias-variance discussion for the instantiated loss and the gradient bounds
for (\ref{eq:gb-kto-elbo}) is provided in the Section \ref{sec:theory}.

\subsection{Variance Reduction}

We show theoretically in Section \ref{sec:theory} that VRPO-style ELBO variance controls work for ELBO-KTO as well and we use these techniques in our implementation. Particularly, we allocate MC budget across
diffusion steps and share random draws between the policy and
the reference to induce positive covariance between their $\widehat{\mathcal{B}}$ estimators.

\subsection{Efficiency Considerations}

\paragraph{Half the Forward-Backward Passes.}
Compared to pairwise DPO on dLLMs, ELBO-KTO uses one policy forward/backward and one reference forward per example (vs. two of each), yielding $\sim2\times$ lower activations or $\sim2\times$ effective batch size at the same memory.

\paragraph{Cheaper Data.}
Binary, unpaired labels avoid pair construction. KTO handles the data imbalance via asymmetric loss aversion for desired and undesired samples.

\paragraph{Zero Compute Baseline.}
The global baseline (\ref{eq:global-baseline}) is computed from the same batch used for training; no mismatched pairs or additional MC evaluations are required to estimate the KL baseline.

\newtheorem{theorem}{Theorem}
\newtheorem{lemma}{Lemma}
\newtheorem{definition}{Definition}
\newcommand{\ED}{\mathbb{E}_{\mathcal{D}}}      
\newcommand{\Emc}{\mathbb{E}_{\mathrm{MC}}}      
\newcommand{\EMC}{\mathbb{E}_{\mathrm{MC}}} 
\newcommand{\Varmc}{\mathrm{Var}_{\mathrm{MC}}}  

\newenvironment{restatedtheorem}[1]{%
  \par\noindent\textbf{Theorem~\ref{#1} (Restated).}\ \itshape
}{%
  \par\normalfont
}

\newenvironment{restatedlemma}[1]{%
  \par\noindent\textbf{Lemma~\ref{#1} (Restated).}\ \itshape
}{%
  \par\normalfont
}

\section{THEORETICAL ANALYSIS}
\label{sec:theory}
We present a theoretical analysis of ELBO-KTO and give bounds for bias and variance of loss and gradient. Using these bounds, we further motivate the use of VRPO-style variance reduction techniques to stabilize training. We also demonstrate the optimality of the global per-batch baseline $\hat b_0$ as a constant baseline for a mini-batch. For clarity, we only present the main theorems in this section and defer a detailed analysis with proofs to Appendix B.

\subsection{Setup and Notation}
\label{sec:setup-notation}

\paragraph{Dataset and Minibatch.}
Let the finite dataset be $\mathcal{D}_N=\{(x_n,y_n)\}_{n=1}^N$, where $x_n$ is prompt and $y_n$ is response.
A minibatch is $S=\{(x_i,y_i,s_i,\lambda_i)\}_{i=1}^m$, sampled uniformly. Here $s_i\in\{-1,+1\}$ denotes undesirable/desirable,
and $\lambda_i\in\{\lambda_D,\lambda_U\}$ are class weights with
$\lambda_{\max}=\max\{\lambda_D,\lambda_U\}$.

\paragraph{Policies and ELBO Plug-In.}
Let $\pi_\theta(\cdot\mid x)$ be the current policy and
$\pi_{\mathrm{ref}}(\cdot\mid x)$ be the frozen reference.
We use an ELBO surrogate $B_\pi(y\mid x)$ and an unbiased MC estimator
$\widehat B_\pi(y\mid x)$ satisfying
$\Emc[\widehat B_\pi(y\mid x)]=B_\pi(y\mid x)$.

\paragraph{ELBO Margins and Global Baseline.}
Define the ELBO margin as
\begin{equation}
    r_i \;:=\; B_{\pi_\theta}(y_i\mid x_i)-B_{\pi_{\mathrm{ref}}}(y_i\mid x_i),
\end{equation}
and its MC estimate as
\begin{equation}
    \hat r_i \;:=\; \widehat B_{\pi_\theta}(y_i\mid x_i)-\widehat B_{\pi_{\mathrm{ref}}}(y_i\mid x_i).
\end{equation}
The global mini-batch baseline is
$\hat b_0=\tfrac{1}{m}\sum_{j=1}^m \hat r_j$, and the signed centered margin is
$\hat\delta_i=s_i(\hat r_i-\hat b_0)$.
For gradients, we treat $\hat b_0$ and the reference term as stop-grad.

\paragraph{Objective and Target.}
With logistic link $g(u)=\sigma(\beta u)$, the per-item and batch losses are
\begin{align}
\hat\ell_i \;=\; \lambda_i\big(1-g(\hat\delta_i)\big),\qquad
\hat L(S) \;=\; \tfrac{1}{m}\sum_{i=1}^m \hat\ell_i.
\end{align}
The corresponding noise-free target \emph{global-baseline} replaces the MC estimates by expectations:
\begin{align}
L^{\mathrm{sg}}_{\mathrm{GB}}(S;\theta)
=\tfrac{1}{m}\sum_{i=1}^m \lambda_i\big(1-g\big(s_i(r_i-b_0)\big)\big),
\end{align}
where $b_0=\Emc[\hat b_0] = \tfrac{1}{m}\sum_{j=1}^m r_j$.
\paragraph{Lipschitz Constants.}
We use $L_g$ to denote Lipschitz constant of $g$ and $L_{g'}$ to denote the Lipschitz constant of $g'$. Since $g=\sigma(\beta u)$, the values of these constants can be derived exactly as shown in Appendix B. These values come out to be $L_g = \beta/4$ and $L_{g'}=\beta^2/(6\sqrt{3})$.

\paragraph{MC Design and Correlation Structure.}
Conditioning on the batch $S$, we can write $\Emc[\hat r_i]=r_i$. Assuming the exchangeability of the indices under the MC design, let $v(S):=\Varmc(\widehat r_i)$ and $c(S):=\Cov(\widehat r_i,\widehat r_j)$ for $i\neq j$.

\paragraph{Centered-Margin Variance Aggregator.}
All our bounds depend on the scalar

\begin{equation}
\begin{aligned}
\Psi(S)\;&:=\;\tfrac{1}{m}\sum_{i=1}^m 
\Emc\!\big[\Var_{\mathrm{MC}}(\hat\delta_i)\big] \\
\;&=\;\tfrac{m-1}{m}\,\big( v(S)- c(S)\big),
\label{eq:Psi}
\end{aligned}
\end{equation}
where the identity follows by expanding $\Var(\hat r_i-\hat b_0)$ under the exchangeability assumptions as shown in Appendix B.

\subsection{Loss Bias}
\label{sec:loss-bias}

\begin{theorem}[Loss bias bound]
\label{thm:loss-bias}
The minibatch loss bias relative to the global-baseline target satisfies
\begin{equation*}
\begin{aligned}
\Big|
\ED\Emc[\hat L(S)]
- \ED[L^{\mathrm{sg}}_{\mathrm{GB}}(S;\theta)]
\Big|
\;\le\; \\
\lambda_{\max} L_g\;
\ED\!\big[\sqrt{\Psi(S)}\big],
\end{aligned}
\end{equation*}
where $\Psi(S)$ is the centered-margin variance
aggregator defined in~\eqref{eq:Psi}.
\end{theorem}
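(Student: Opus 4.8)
The plan is to condition on the minibatch $S$, collapse the loss bias into an average of per-item Jensen gaps of the logistic link $g$, control each gap using the Lipschitz constant $L_g$ together with a second-moment (Cauchy--Schwarz) estimate, and then pass to the outer expectation over $\mathcal{D}$ by the triangle inequality.

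First, I would fix $S$ and exploit unbiasedness of the MC estimator: $\Emc[\hat r_i]=r_i$ and $\Emc[\hat b_0]=b_0$, hence $\Emc[\hat\delta_i]=s_i(r_i-b_0)$. Since $L^{\mathrm{sg}}_{\mathrm{GB}}(S;\theta)$ carries no MC randomness, subtracting it from $\Emc[\hat L(S)]$ term by term cancels the constants $\lambda_i$ and the additive $1$, leaving
\[
\Emc[\hat L(S)]-L^{\mathrm{sg}}_{\mathrm{GB}}(S;\theta)
=\frac{1}{m}\sum_{i=1}^m \lambda_i\Big(g\big(\Emc[\hat\delta_i]\big)-\Emc\big[g(\hat\delta_i)\big]\Big),
\]
so each summand is exactly the Jensen gap of $g$ at the MC-random argument $\hat\delta_i$.

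Next, I would bound each gap. With $\mu_i=\Emc[\hat\delta_i]$ and $g$ being $L_g$-Lipschitz (the paper records $L_g=\beta/4$),
\[
\big|g(\mu_i)-\Emc[g(\hat\delta_i)]\big|
=\big|\Emc[g(\mu_i)-g(\hat\delta_i)]\big|
\le L_g\,\Emc|\hat\delta_i-\mu_i|
\le L_g\sqrt{\Varmc(\hat\delta_i)},
\]
the last step being Cauchy--Schwarz ($\Emc|Z|\le\sqrt{\Emc[Z^2]}$ with $Z=\hat\delta_i-\mu_i$). Using $\lambda_i\le\lambda_{\max}$, averaging, and then concavity of $\sqrt{\cdot}$,
\[
\big|\Emc[\hat L(S)]-L^{\mathrm{sg}}_{\mathrm{GB}}(S;\theta)\big|
\;\le\; \lambda_{\max}L_g\,\frac{1}{m}\sum_{i=1}^m\sqrt{\Varmc(\hat\delta_i)}
\;\le\; \lambda_{\max}L_g\sqrt{\tfrac{1}{m}\textstyle\sum_{i=1}^m\Varmc(\hat\delta_i)}
\;=\;\lambda_{\max}L_g\sqrt{\Psi(S)},
\]
using the definition of $\Psi(S)$ in \eqref{eq:Psi}; under the stated exchangeability each $\Varmc(\hat\delta_i)$ equals $\tfrac{m-1}{m}(v(S)-c(S))$, so the Jensen step is in fact an equality. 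Finally, I would take $\ED$ of both sides and apply $|\ED[Z]|\le\ED|Z|$ together with linearity, $\ED\Emc[\hat L(S)]-\ED[L^{\mathrm{sg}}_{\mathrm{GB}}(S;\theta)]=\ED[Z]$, to obtain the claimed bound.

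None of this is deep; the only point requiring care is the treatment of $\hat b_0$. Although $\hat b_0$ is stop-gradient, it is still an MC-random quantity correlated with every $\hat r_i$, so the per-item variance entering the bound is $\Varmc(\hat\delta_i)=\Varmc(\hat r_i-\hat b_0)$ rather than $\Varmc(\hat r_i)$; expanding this variance under the exchangeability assumptions is what produces the $\tfrac{m-1}{m}(v(S)-c(S))$ form of $\Psi(S)$. I note also that no assumption on the sign of $c(S)$, nor any independence across items in the batch, is needed for this theorem --- those only matter when one wants $\Psi(S)$ itself to be small, which is the role of the VRPO-style variance-reduction controls discussed in the paper.
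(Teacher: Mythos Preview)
Your proof is correct and follows essentially the same route as the paper's: condition on $S$, reduce to per-item differences $g(\Emc[\hat\delta_i])-\Emc[g(\hat\delta_i)]$, bound each via the Lipschitz constant $L_g$ and Cauchy--Schwarz to get $\sqrt{\Varmc(\hat\delta_i)}$, then use concavity of $\sqrt{\cdot}$ to collapse to $\sqrt{\Psi(S)}$ before taking $\ED$ with the triangle inequality. Your remark that exchangeability makes the concavity step an equality is a nice sharpening the paper does not state explicitly in the proof.
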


This result shows that the loss bias arises purely from applying the link function $g$ to noisy centered margin $\hat \delta_i$. Shrinking $\Psi(S)$ directly tightens this gap.

\subsection{Loss Variance}
\label{sec:loss-var}

\begin{theorem}[Loss variance bound]
\label{thm:loss-var}
For the minibatch loss $\hat L(S)=\tfrac{1}{m}\sum_{i=1}^m \lambda_i\big(1-g(\hat\delta_i)\big)$
with $g(u)=\sigma(\beta u)$,
the MC-induced variance satisfies
\begin{align*}
\Var_{\mathrm{MC}}(\hat L(S))
\;\le\;
(\lambda_{\max}L_g)^2\;
\ED\!\big[\Psi(S)\big].
\end{align*}
\end{theorem}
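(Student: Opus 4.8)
The plan is to condition on the minibatch $S$, reduce the variance of $\hat L(S)$ to a weighted average of per-item MC variances, and then contract each term by the Lipschitz constant of $g$. Since $\hat L(S)=\frac1m\sum_i\lambda_i-\frac1m\sum_i\lambda_i g(\hat\delta_i)$ and the class weights $\lambda_i$ are fixed once $S$ is drawn, only $\frac1m\sum_i\lambda_i g(\hat\delta_i)$ carries MC randomness, so
\[
\Var_{\mathrm{MC}}\!\big(\hat L(S)\mid S\big)=\Var_{\mathrm{MC}}\!\Big(\tfrac1m\textstyle\sum_{i=1}^m\lambda_i g(\hat\delta_i)\ \Big|\ S\Big).
\]
Expanding the variance of the average as $\frac{1}{m^2}\sum_{i,j}\Cov_{\mathrm{MC}}\!\big(\lambda_i g(\hat\delta_i),\lambda_j g(\hat\delta_j)\mid S\big)$, bounding each covariance by $\sqrt{\Var_i\,\Var_j}$ (Cauchy–Schwarz) and then applying $\big(\frac1m\sum_i\sqrt{a_i}\big)^2\le\frac1m\sum_i a_i$ with $a_i=\lambda_i^2\Var_{\mathrm{MC}}(g(\hat\delta_i)\mid S)$ gives the clean conditional bound $\Var_{\mathrm{MC}}(\hat L(S)\mid S)\le\frac1m\sum_i\lambda_i^2\,\Var_{\mathrm{MC}}(g(\hat\delta_i)\mid S)$. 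The point of routing through Cauchy–Schwarz rather than "variance of a sum $=$ sum of variances" is that the $\hat\delta_i$ are coupled through the shared baseline $\hat b_0$, so no sign or size control on the cross-covariances is available; this route tolerates arbitrary within-batch correlation.

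Next I would invoke the one-dimensional Lipschitz variance-contraction fact: for an i.i.d.\ copy $X'$ of $X$, $\Var(g(X))=\tfrac12\E[(g(X)-g(X'))^2]\le\tfrac12 L_g^2\,\E[(X-X')^2]=L_g^2\,\Var(X)$. Applying this to $X=\hat\delta_i$ under the MC law conditioned on $S$ yields $\Var_{\mathrm{MC}}(g(\hat\delta_i)\mid S)\le L_g^2\,\Var_{\mathrm{MC}}(\hat\delta_i\mid S)$. Substituting and using $\lambda_i^2\le\lambda_{\max}^2$ collapses the bound to
\[
\Var_{\mathrm{MC}}\!\big(\hat L(S)\mid S\big)\ \le\ (\lambda_{\max}L_g)^2\cdot\tfrac1m\textstyle\sum_{i=1}^m\Var_{\mathrm{MC}}(\hat\delta_i\mid S)\ =\ (\lambda_{\max}L_g)^2\,\Psi(S),
\]
by the definition of $\Psi(S)$ in \eqref{eq:Psi} (equivalently $\tfrac{m-1}{m}(v(S)-c(S))$, which is the form that exhibits the common-random-number gain). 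Taking $\ED[\cdot]$ over the draw of $S$ gives the stated bound, reading $\Var_{\mathrm{MC}}(\hat L(S))$ on the left as the MC-induced (i.e.\ expected conditional) variance $\ED[\Var_{\mathrm{MC}}(\hat L(S)\mid S)]$.

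I do not expect a genuine obstacle here: the only care point is bookkeeping the two layers of randomness and resisting the temptation to split the variance of the sum into independent pieces. Everything else — dropping the $S$-measurable terms ($\lambda_i$ and the additive constant), the Lipschitz-variance lemma, and the value $L_g=\beta/4$ from the preliminaries — is routine, and the identity for $\Psi(S)$ is already established in the setup.
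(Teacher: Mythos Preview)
Your proposal is correct and follows essentially the same route as the paper: drop the $S$-measurable constant, bound the variance of the average by $\tfrac{1}{m}\sum_i\lambda_i^2\Var_{\mathrm{MC}}(g(\hat\delta_i))$ via two Cauchy--Schwarz steps, apply the Lipschitz variance contraction (you use the i.i.d.-copy identity, the paper uses the equivalent $\Var(f(X))=\min_a\E[(f(X)-a)^2]$ formulation), and then take $\ED$. No substantive difference.
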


This bound again depends on the centered-margin variance aggregator $\Psi(S)$ and shrinking it will help reduce the loss variance.

\subsection{Gradient Bias}
\label{sec:grad-bias}

We define the stochastic gradient for a minibatch $S$ as
\begin{equation}
\begin{aligned}
\label{eq:g-hat-s}
\hat G(S)
&=\frac{1}{m}\sum_{i=1}^m \hat a_i \,\nabla_\theta \hat r_i,
\end{aligned}
\end{equation}
where $\hat a_i = -\lambda_i s_i g'(\hat\delta_i)$ and
$\hat\delta_i = s_i(\hat r_i-\hat b_0)$. Also, let
\begin{equation}
\label{eq:g}
    G = \frac{1}{m}\sum_{i=1}^m a_i \,\nabla_\theta r_i,
\end{equation}

where $a_i = -\lambda_i s_i g'(\delta_i)$ and
$\delta_i=s_i(r_i-b_0)$.
We assume unbiased ELBO gradients $\Emc[\nabla_\theta \hat r_i]=\nabla_\theta r_i$.

\begin{theorem}[Gradient bias bound]
\label{thm:grad-bias} Let $\|\cdot\|$ be the $L_2$-norm.
If $\|\nabla_\theta r_i\|\le \bar G$ for all items, then
\begin{equation*}
\begin{aligned}
\bigl\|\ED&\Emc[\hat G(S)] - \ED[G(S)]\bigr\|
\le \\ &\lambda_{\max} L_{g'}\, \ED\!\big[\sqrt{\Psi(S)}\big]\, \bar G + \lambda_{\max} L_g\, \ED\!\big[\sqrt{\bar v_{\nabla}(S)}\big],
\end{aligned}
\end{equation*}

where $\bar v_{\nabla}(S) = \tfrac{1}{m}\sum_{i=1}^m
\Emc\!\big[\|\nabla_\theta \hat r_i - \nabla_\theta r_i\|^2\big]$.
\end{theorem}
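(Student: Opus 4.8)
The plan is to bound the gradient bias by inserting an intermediate quantity and applying the triangle inequality, mirroring the structure of the error term in the final bound. Define the auxiliary gradient $\tilde G(S) = \tfrac{1}{m}\sum_{i=1}^m \hat a_i\,\nabla_\theta r_i$, which uses the noisy coefficients $\hat a_i = -\lambda_i s_i g'(\hat\delta_i)$ but the \emph{noise-free} gradients $\nabla_\theta r_i$. Then
\[
\bigl\|\ED\Emc[\hat G(S)] - \ED[G(S)]\bigr\|
\le \bigl\|\ED\Emc[\hat G(S) - \tilde G(S)]\bigr\|
  + \bigl\|\ED\Emc[\tilde G(S)] - \ED[G(S)]\bigr\|.
\]
The first term captures the error from using noisy ELBO gradients $\nabla_\theta \hat r_i$ in place of $\nabla_\theta r_i$; the second captures the error from evaluating $g'$ at the noisy centered margin $\hat\delta_i$ instead of $\delta_i$. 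I expect these to produce the $\lambda_{\max}L_g\,\ED[\sqrt{\bar v_\nabla(S)}]$ and $\lambda_{\max}L_{g'}\,\ED[\sqrt{\Psi(S)}]\,\bar G$ terms respectively.

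For the first term, conditioning on $S$ and writing $\hat a_i (\nabla_\theta\hat r_i - \nabla_\theta r_i)$, I would bound $|\hat a_i| \le \lambda_i L_g \le \lambda_{\max} L_g$ since $g' = \beta\sigma(\beta\cdot)(1-\sigma(\beta\cdot))$ is bounded by $\beta/4 = L_g$ (note the coefficient magnitude equals the Lipschitz constant of $g$, as $g' \le L_g$). Passing the norm inside the average and the expectation via Jensen, then applying Cauchy–Schwarz over the MC randomness to $\Emc\|\nabla_\theta\hat r_i - \nabla_\theta r_i\| \le \sqrt{\Emc\|\nabla_\theta\hat r_i - \nabla_\theta r_i\|^2}$, and finally Jensen again to pull $\sqrt{\cdot}$ out of the average over $i$, yields $\lambda_{\max}L_g\,\ED[\sqrt{\bar v_\nabla(S)}]$.

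For the second term, the key is $\Emc[\nabla_\theta\hat r_i]=\nabla_\theta r_i$ with $\nabla_\theta r_i$ deterministic given $S$, so $\Emc[\tilde G(S)] - G(S) = \tfrac{1}{m}\sum_i (\Emc[\hat a_i] - a_i)\,\nabla_\theta r_i$. I bound $|\Emc[\hat a_i] - a_i| \le \lambda_i\,\Emc|g'(\hat\delta_i) - g'(\delta_i)| \le \lambda_i L_{g'}\,\Emc|\hat\delta_i - \delta_i|$ by the Lipschitz property of $g'$, then Jensen gives $\Emc|\hat\delta_i-\delta_i| \le \sqrt{\Emc[(\hat\delta_i-\delta_i)^2]} = \sqrt{\Var_{\mathrm{MC}}(\hat\delta_i)}$ since $\Emc[\hat\delta_i]=\delta_i$. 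Combining with $\|\nabla_\theta r_i\|\le\bar G$, averaging, and applying Jensen to move $\sqrt{\cdot}$ outside the average over $i$ produces $\lambda_{\max}L_{g'}\,\bar G\,\ED[\sqrt{\Psi(S)}]$, using the definition of $\Psi(S)$ in~\eqref{eq:Psi}. The main obstacle is the careful bookkeeping of nested expectations: $\Emc[\hat a_i]$ and $\Emc[\hat\delta_i-\delta_i]$ must be handled before taking norms, and one must verify that $\Emc[\hat\delta_i]=\delta_i$ genuinely holds—this requires that $\Emc[\hat b_0]=b_0$, which follows since $\hat b_0$ is a sample average of the unbiased $\hat r_j$, and that the stop-gradient treatment of $\hat b_0$ does not interfere (it enters only through $\hat\delta_i$ inside $g'$, not through a differentiated term). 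A secondary subtlety is whether $\Emc[\hat a_i\,\nabla_\theta\hat r_i]$ factors; since we only need the bias of $\Emc[\hat G]$ and are bounding $\|\Emc[\hat G - \tilde G]\|$ by $\Emc\|\hat G - \tilde G\|$ via Jensen, no independence assumption between $\hat a_i$ and $\nabla_\theta\hat r_i$ is needed.
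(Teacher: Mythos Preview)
Your proposal is correct and follows essentially the same route as the paper: the auxiliary $\tilde G(S)$ with $\hat a_i\,\nabla_\theta r_i$ induces exactly the paper's decomposition $\hat a_i\nabla_\theta\hat r_i - a_i\nabla_\theta r_i = \hat a_i(\nabla_\theta\hat r_i-\nabla_\theta r_i) + (\hat a_i-a_i)\nabla_\theta r_i$, and your two terms are bounded the same way (Lipschitz of $g'$ plus Cauchy--Schwarz for the weight-noise term, the uniform bound $|g'|\le L_g$ plus Jensen for the score-gradient term). The only cosmetic difference is that the paper applies Cauchy--Schwarz to $\Emc[\hat a_i(\nabla_\theta\hat r_i-\nabla_\theta r_i)]$ as $\sqrt{\Emc[\hat a_i^2]}\sqrt{\Emc\|\cdot\|^2}$ rather than first bounding $|\hat a_i|$ pointwise, which yields the same constant.
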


This result shows that gradient bias has two contributions:
weight noise and score-gradient noise. Weight noise arises from applying $g$ to noisy centered margin $\hat \delta_i$, and is controlled by
$\Psi(S)$. Score-gradient noise arises from stochastic ELBO gradients
controlled by $\bar v_{\nabla}(S)$, which can be reduced by increasing the MC budget.

\subsection{Gradient Variance}
\label{sec:grad-var}

We define
\begin{equation*}
    \begin{gathered}
        \tilde G^2(S) = \tfrac{1}{m}\sum_{i=1}^m \Emc\!\big[\|\nabla_\theta \hat r_i\|^2\big], \\
\bar c_{\nabla}(S)=\tfrac{1}{m(m-1)}\sum_{i\ne j}\Emc[\langle \xi_i,\xi_j\rangle],
    \end{gathered}
\end{equation*}
where $\xi_i = \nabla_\theta \hat r_i - \nabla_\theta r_i$, and assume unbiased ELBO gradients $\Emc[\nabla_\theta \hat r_i]=\nabla_\theta r_i$.

\begin{theorem}[Gradient variance]
\label{thm:grad-var}
For mini-batch $S$, let $U(S)=\frac{1}{m}\sum_{i=1}^m a_i\,\nabla_\theta \hat r_i$.
Then
\begin{equation*}
\begin{aligned}
&\Var_{\mathrm{MC}}\!\big(\hat G(S)\big)
\le\; \\
&\qquad \Big(\sqrt{\Var_{\mathrm{MC}}\!\big(U(S)\big)}
\;+\;
\lambda_{\max} L_{g'}\ \sqrt{\Psi(S)}\; \tilde G(S)\Big)^{\!2},
\end{aligned}
\end{equation*} 
where 
\begin{equation*}
\begin{aligned}
\Var_{\mathrm{MC}}(U)
\;\le\;
(\lambda_{\max}L_g)^2
\Big(\frac{\bar v_{\nabla}(S)}{m}
\;+\;\frac{m-1}{m}\ \bar c_{\nabla}(S)\Big).
\end{aligned}
\end{equation*}
\end{theorem}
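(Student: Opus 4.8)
The plan is to prove the displayed bound in two layers. First I would split $\hat G(S)$ into the pure gradient-noise piece $U(S)=\frac{1}{m}\sum_i a_i\nabla_\theta\hat r_i$ (true weights $a_i=-\lambda_i s_i g'(\delta_i)$, noisy gradients) and the weight-perturbation piece $\hat G(S)-U(S)=\frac{1}{m}\sum_i(\hat a_i-a_i)\nabla_\theta\hat r_i$. Since $\sqrt{\Var_{\mathrm{MC}}(\cdot)}$ is the $L^2(\mathrm{MC})$ seminorm of the centered random vector, Minkowski's inequality gives
\[
\sqrt{\Var_{\mathrm{MC}}(\hat G(S))}\;\le\;\sqrt{\Var_{\mathrm{MC}}(U(S))}\;+\;\sqrt{\Var_{\mathrm{MC}}(\hat G(S)-U(S))},
\]
and squaring reduces the theorem to (a) $\Var_{\mathrm{MC}}(\hat G(S)-U(S))\le \lambda_{\max}^2 L_{g'}^2\,\Psi(S)\,\tilde G^2(S)$, and (b) the stated bound on $\Var_{\mathrm{MC}}(U(S))$.

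For part (a), I would use $\Var_{\mathrm{MC}}(\hat G-U)\le\Emc[\|\hat G-U\|^2]$, then the Lipschitzness of $g'$: $|\hat a_i-a_i|=\lambda_i|g'(\hat\delta_i)-g'(\delta_i)|\le\lambda_{\max}L_{g'}|\hat\delta_i-\delta_i|$, with $|\hat\delta_i-\delta_i|=|(\hat r_i-r_i)-(\hat b_0-b_0)|$. The triangle inequality on the sum followed by Cauchy--Schwarz over the batch index gives
\[
\|\hat G-U\|\;\le\;\lambda_{\max}L_{g'}\Big(\tfrac1m\sum_i(\hat\delta_i-\delta_i)^2\Big)^{1/2}\Big(\tfrac1m\sum_i\|\nabla_\theta\hat r_i\|^2\Big)^{1/2}.
\]
Because $\Emc[\hat\delta_i]=\delta_i$ (unbiasedness of $\hat r_i$ and $\hat b_0$), one has $\Emc[(\hat\delta_i-\delta_i)^2]=\Var_{\mathrm{MC}}(\hat\delta_i)$, so the first bracketed average has MC-mean $\Psi(S)$ and the second has MC-mean $\tilde G^2(S)$; taking expectations and decoupling the product delivers (a).

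For part (b), the weights $a_i$ are deterministic given $S$, so $U(S)-\Emc[U(S)]=\frac1m\sum_i a_i\xi_i$ with $\xi_i=\nabla_\theta\hat r_i-\nabla_\theta r_i$ and $\Emc[\xi_i]=0$. Expanding the squared norm into diagonal and off-diagonal parts,
\[
\Var_{\mathrm{MC}}(U(S))=\frac1{m^2}\sum_i a_i^2\,\Emc\|\xi_i\|^2+\frac1{m^2}\sum_{i\ne j}a_ia_j\,\Emc\langle\xi_i,\xi_j\rangle,
\]
and $|a_i|=\lambda_i|g'(\delta_i)|\le\lambda_{\max}L_g$ since the Lipschitz constant $L_g$ of $g=\sigma(\beta u)$ also bounds $|g'|$. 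Identifying $\frac1m\sum_i\Emc\|\xi_i\|^2=\bar v_\nabla(S)$ and $\frac1{m(m-1)}\sum_{i\ne j}\Emc\langle\xi_i,\xi_j\rangle=\bar c_\nabla(S)$ yields $\Var_{\mathrm{MC}}(U)\le(\lambda_{\max}L_g)^2\big(\bar v_\nabla(S)/m+\tfrac{m-1}{m}\bar c_\nabla(S)\big)$.

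The main obstacle is the decoupling step at the end of part (a): the empirical margin-error average and the empirical gradient-norm average are functions of the \emph{same} MC draws, so their MC-expectation does not factor in general. Handling this cleanly requires either invoking the exchangeability/near-independence built into the MC design (so margin noise and gradient noise across the batch average out jointly to the product $\Psi(S)\tilde G^2(S)$), or a worst-case bound on one of the two factors, or keeping the product under a single expectation and controlling it via a joint second-moment assumption on $\hat\delta_i$ and $\nabla_\theta\hat r_i$. A lesser subtlety is the sign bookkeeping in the off-diagonal sum of part (b), where $a_ia_j$ and $\Emc\langle\xi_i,\xi_j\rangle$ may differ in sign; the stated form relies on the common-random-number design keeping these cross terms controlled so that replacing $|a_ia_j|$ by $(\lambda_{\max}L_g)^2$ is legitimate.
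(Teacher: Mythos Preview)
Your decomposition $\hat G=U+V$, the Minkowski bound on $\sqrt{\Var_{\mathrm{MC}}}$, and the per-piece arguments all match the paper's proof. For the decoupling in your part~(a), the paper runs a three-step Cauchy--Schwarz chain: Minkowski over the batch index gives $\Emc\|V\|^2\le\tfrac{1}{m^2}\bigl(\sum_i\sqrt{\Emc[(\hat a_i-a_i)^2\|\nabla_\theta\hat r_i\|^2]}\bigr)^2$; a step labeled ``Cauchy--Schwarz in expectation'' then splits each summand into $\sqrt{\Emc[(\hat a_i-a_i)^2]}\,\sqrt{\Emc\|\nabla_\theta\hat r_i\|^2}$; finally Cauchy--Schwarz over $i$ yields the product $\Psi(S)\,\tilde G^2(S)$. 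Your reservation is apt: that middle step asserts $\Emc[X^2Y^2]\le\Emc[X^2]\,\Emc[Y^2]$, which is not the standard Cauchy--Schwarz inequality and would require an independence or negative-correlation assumption between $(\hat a_i-a_i)^2$ and $\|\nabla_\theta\hat r_i\|^2$ to hold as written. Likewise, in your part~(b) the paper simply replaces $a_ia_j$ by $(\lambda_{\max}L_g)^2$ in the off-diagonal sum without discussing the sign interaction you flag. In short, your proposal reproduces the paper's argument and correctly identifies its two thin spots; the paper supplies no additional machinery beyond what you already outline.
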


For the special case of independent per–item MC seeds, 
we can assume that the per–item MC randomness is conditionally independent across items in the batch, so that $\Emc[\langle \xi_i,\xi_j\rangle]=0$ for $i\neq j$.
Then the variance of $U(S)$ collapses to the diagonal:
\begin{equation*}
\begin{aligned}
\Var_{\mathrm{MC}}\!\big(U(S)\big)
&= \frac{1}{m^2}\sum_{i=1}^m a_i^2\,\Emc\!\big[\|\xi_i\|^2\big] \\
&\;\le\;
(\lambda_{\max}L_g)^2\,\frac{\bar v_{\nabla}(S)}{m},
\end{aligned}
\end{equation*}
where $\bar v_{\nabla}(S)=\tfrac{1}{m}\sum_{i=1}^m \Emc\!\big[\|\xi_i\|^2\big]$.
Substituting into Theorem~\ref{thm:grad-var} gives
\begin{equation*}
\begin{aligned}
&\Var_{\mathrm{MC}}\!\big(\hat G(S)\big)
\le \\
&\qquad\Big((\lambda_{\max}L_g)\,\sqrt{\tfrac{\bar v_{\nabla}(S)}{m}}
\;+\;
\lambda_{\max} L_{g'}\ \sqrt{\Psi(S)}\;\tilde G(S)\Big)^{\!2}.
\end{aligned}
\end{equation*}

Similar to gradient bias, gradient variance can also be bounded by two sources, weight noise and score-gradient noise.

\subsection{Estimator Design Strategies}
\label{sec:vrpo-design}

We now make the dependence on the estimator design explicit.
Throughout this section we assume the exchangeable MC setting of
Section~\ref{sec:setup-notation}, so that $v(S)$ and $c(S)$ are well-defined. Our goal is to strategically decrease $v(S)$ and increase $c(S)$, thereby shrinking the common driver $\Psi(S)$.

\paragraph{Decreasing $v(S)$.}
Expanding the formula for $v(S)$, we get
\begin{align*}
v(S)
&=\Var_{\mathrm{MC}}(\hat r_i) \\
&= \Var_{\mathrm{MC}} ( \widehat B_{\pi_\theta}(y_i\mid x_i)-\widehat B_{\pi_{\mathrm{ref}}}(y_i\mid x_i)) \\
&= \Var_{\mathrm{MC}}(\widehat B_{\pi_\theta}(y_i\mid x_i)) + \Var_{\mathrm{MC}}(\widehat B_{\pi_{\mathrm{ref}}}(y_i\mid x_i)) \\ &\qquad -2 \Cov (\widehat B_{\pi_\theta}(y_i\mid x_i), \widehat B_{\pi_{\mathrm{ref}}}(y_i\mid x_i))
\label{eq:vS-decomp}
\end{align*}
Using Theorem 2 from \cite{zhu2025llada15variancereducedpreference}, we get that the $\Var_{\mathrm{MC}} (\widehat B)$ terms decrease on (i) Increasing the overall sampling budget, and (ii) Allocating the full budget to timesteps. Additionally, from Theorem 3 of \cite{zhu2025llada15variancereducedpreference}, we get that using antithetic sampling, i.e., using shared random numbers to compute $\widehat B_{\pi_\theta}$ and $\widehat B_{\pi_{\mathrm{ref}}}$, leads to a positive covariance term, thus decreasing $v(S)$.

\paragraph{Increasing $c(S)$.}
Recall $c(S)=\Cov(\hat r_i,\hat r_j)$ under the exchangeability of indices. Intuitively, sharing random numbers across items in the minibatch $S$ will lead to a positive co-movement of $\hat r_i$ and $\hat r_j$, thereby increasing $c(S)$. This can further reduce $\Psi(S)$ by raising $c(S)$. For now, we do not use this technique in our experiments due to time constraints and leave this for future work.

\subsection{Global Baseline Optimality}
\label{sec:baseline-opt}

\begin{lemma}[Global Baseline Optimality]
\label{lem:psi-diff}
For any baseline $b$ that is constant across items in a batch and may depend on MC randomness,
\[
\Psi_b(S)-\Psi_{\hat b_0}(S)
\;=\;
\Var_{\mathrm{MC}}\!\big(b-\hat b_0\big)
\;\ge\;0,
\]
where $\hat b_0=\tfrac{1}{m}\sum_j \hat r_j$. Hence every baseline of the form
$b=\hat b_0+K$ with deterministic constant $K$ attains the same minimum value
$\Psi_b(S)=\Psi_{\hat b_0}(S)$. If, in addition, $\Emc[b]=b_0$, then $K=0$
and the unique minimizer is $b=\hat b_0$.
\end{lemma}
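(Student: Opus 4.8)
\textbf{Proof plan for Lemma~\ref{lem:psi-diff}.}

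The plan is to reduce everything to the identity $\Psi_b(S) = \tfrac{1}{m}\sum_i \Emc[\Var_{\mathrm{MC}}(\hat\delta_i^{(b)})]$ where $\hat\delta_i^{(b)} = s_i(\hat r_i - b)$ is the signed centered margin using the general constant baseline $b$. Since $s_i \in \{\pm 1\}$, squaring kills the sign, so $\Var_{\mathrm{MC}}(\hat\delta_i^{(b)}) = \Var_{\mathrm{MC}}(\hat r_i - b)$. Thus $\Psi_b(S) = \tfrac{1}{m}\sum_i \Emc[\Var_{\mathrm{MC}}(\hat r_i - b)]$, and the lemma is about comparing $\Var_{\mathrm{MC}}(\hat r_i - b)$ to $\Var_{\mathrm{MC}}(\hat r_i - \hat b_0)$ term by term. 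First I would write $\hat r_i - b = (\hat r_i - \hat b_0) + (\hat b_0 - b)$ and expand the variance of this sum.

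The key step is the cross-term. Expanding, $\Var_{\mathrm{MC}}(\hat r_i - b) = \Var_{\mathrm{MC}}(\hat r_i - \hat b_0) + \Var_{\mathrm{MC}}(\hat b_0 - b) + 2\Cov_{\mathrm{MC}}(\hat r_i - \hat b_0,\; \hat b_0 - b)$. I then claim the cross-covariance term vanishes after averaging over $i$: summing $\hat r_i - \hat b_0$ over $i = 1,\dots,m$ gives exactly zero by the definition $\hat b_0 = \tfrac1m\sum_j \hat r_j$, so $\tfrac1m\sum_i \Cov_{\mathrm{MC}}(\hat r_i - \hat b_0, \hat b_0 - b) = \Cov_{\mathrm{MC}}\big(\tfrac1m\sum_i(\hat r_i - \hat b_0),\; \hat b_0 - b\big) = \Cov_{\mathrm{MC}}(0, \hat b_0 - b) = 0$. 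This holds regardless of what $b$ is, as long as it is constant across items (it may still be random). Averaging the expansion over $i$ and then taking $\Emc$ yields $\Psi_b(S) - \Psi_{\hat b_0}(S) = \Emc[\Var_{\mathrm{MC}}(\hat b_0 - b)] = \Var_{\mathrm{MC}}(b - \hat b_0)$, where the last equality uses that $\Var_{\mathrm{MC}}$ is already the relevant conditional-on-$S$ variance so the outer $\Emc$ is redundant (or, more carefully, that $b - \hat b_0$ has the variance structure claimed). This is manifestly $\ge 0$.

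The remaining two claims are immediate consequences. If $b = \hat b_0 + K$ with $K$ a deterministic constant, then $b - \hat b_0 = K$ has zero variance, so $\Psi_b(S) = \Psi_{\hat b_0}(S)$ — all such shifts are equally optimal. For uniqueness under the additional constraint $\Emc[b] = b_0$: we have $\Emc[\hat b_0] = b_0$ by definition, so $\Emc[b - \hat b_0] = 0$; combined with $\Var_{\mathrm{MC}}(b - \hat b_0) = 0$ (the minimum), $b - \hat b_0$ is a mean-zero random variable with zero variance, hence $b - \hat b_0 = 0$ almost surely, i.e.\ $K = 0$ and $b = \hat b_0$.

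\textbf{Main obstacle.} There is no deep obstacle; the only thing to be careful about is the precise meaning of the nested $\Emc[\Var_{\mathrm{MC}}(\cdot)]$ notation and the exchangeability/conditioning setup from Section~\ref{sec:setup-notation} — specifically making sure that "constant across items" is exactly the hypothesis needed to kill the cross term, and that the cross term vanishes \emph{after} the average over $i$ rather than term by term. I would state that averaging step explicitly so the reader sees where $\sum_i(\hat r_i - \hat b_0) = 0$ is used.
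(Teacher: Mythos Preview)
Your proposal is correct and follows essentially the same route as the paper's proof: both decompose $\hat r_i - b = (\hat r_i - \hat b_0) - (b - \hat b_0)$, expand the variance, and kill the cross-covariance by averaging over $i$ using $\tfrac{1}{m}\sum_i \hat r_i = \hat b_0$. Your treatment is slightly more explicit about why the sign $s_i$ drops out and about the role of the averaging step, which is a good clarification.
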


This result says that $\hat b_0$ is variance-optimal for all possible values of $b$. It requires no additional compute and is thus the principled default for ELBO-KTO.

\section{EXPERIMENTS}

\subsection{ELBO-KTO Training Recipe}

\paragraph{Datasets.}
We evaluate on two public preference datasets. kto-mix-14k contains $\sim$13.5k training prompts with desirable/undesirable labels and a 1.5k test split (750/750). It is the KTO counterpart of dpo-mix-7k \citep{argilla_dpo_mix_7k}.
UltraFeedback-Binary provides 61.1k train and 2k test pairs; we convert pairs to unpaired labels for ELBO-KTO by treating chosen responses as desirable and rejected responses as undesirable.
All models are trained on the respective train split and evaluated on the held-out test split.

\paragraph{Implementation details.}
We train for one epoch with batch size $8$ using AdamW (weight decay $0.01$, $\beta_1{=}0.9$, $\beta_2{=}0.95$) and a $3\%$ linear warmup followed by cosine decay.
For kto-mix-14k, the peak learning rate is $1{\times}10^{-6}$ and we draw $8$ MC samples per example.
For UltraFeedback-Binary, the peak learning rate is $5{\times}10^{-7}$ with $4$ MC samples per example.
We use the LLaDA-8B-Instruct model as reference $\pi_{\mathrm{ref}}$ and precompute $\widehat{B}_{\pi_\mathrm{ref}}$ once to avoid loading both policy and reference models simultaneously during training. Other experimental details related to evaluation can be found in Appendix C.

\subsection{Overall test-set results}
\label{sec:overall-results}

\begin{figure*}[!t]           
  \centering
  \includegraphics[width=0.78\textwidth]{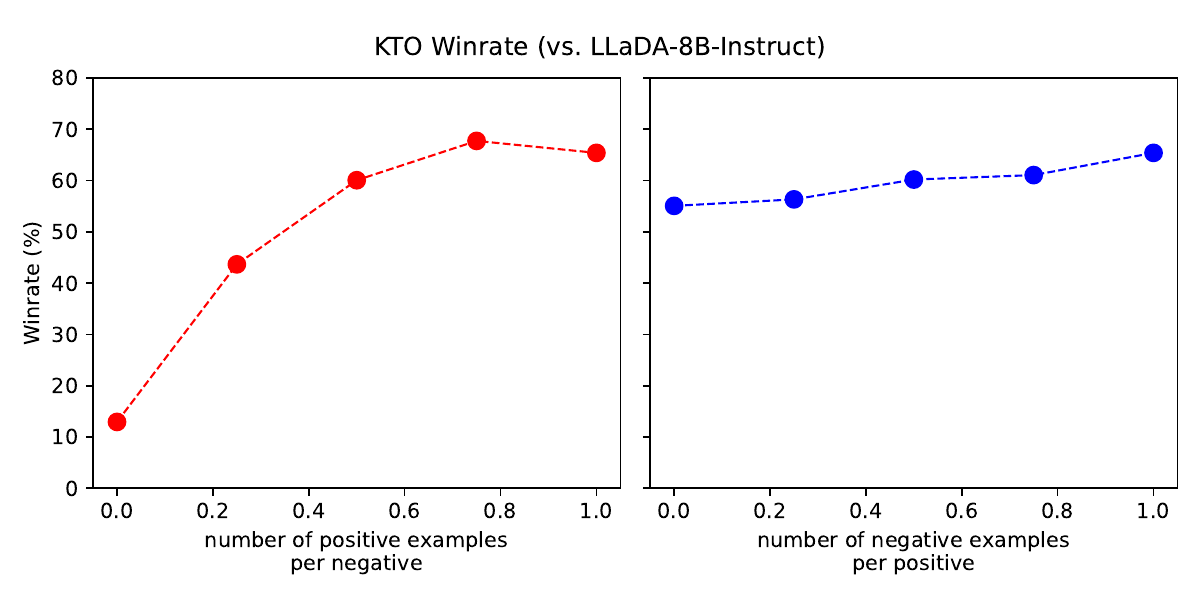} 
  \caption{Adjusted win rate vs.\ LLaDA-8B-Instruct on kto-mix-14k when varying the ratio of desirable to undesirable examples. Left: subsampling desirable examples; Right: subsampling undesirable examples. ELBO-KTO benefits more from desirable examples, consistent with gain sensitivity.}
  \label{fig:kto-imbalance}
\end{figure*}

Table~\ref{tab:winrates} reports adjusted win rates on the same test prompts for comparisons against the base model, LLaDA-8B-Instruct.
Our ELBO-KTO completions win in a clear majority of cases with an adjusted win rate of 65.9\% on kto-mix-14k and 62.3\textbf{\%} on UltraFeedback-Binary.

For context, we report results for the publicly released LLaDA-1.5 trained with VRPO. This is not an apples-to-apples comparison as the data quantity and distribution used to train LLaDa-1.5 differ from our method. LLaDA-1.5 was trained on $\sim350$k paired preferences from a different distribution. As shown in Table~\ref{tab:winrates}, our method shows better performance than LLaDA-1.5. This highlights that targeted preference data and unpaired alignment can be more sample-efficient than scaling paired data from a broader mix.

For further evaluation, we calculate the adjusted win rates for the chosen and the rejected targets from the test dataset.  We observe that the win rate of ELBO-KTO approaches the chosen-target win rate on kto-mix-14k and exceeds that of UltraFeedback-Binary.
The performance of rejected targets fall well below 50\%, confirming that the judge consistently disfavors undesirable responses.

\begin{table}[t]
  \centering
  \caption{Adjusted win rates (AWR) (\%) vs.\ LLaDA-8B-Instruct under FastChat \texttt{lm\_judge} evaluated using \texttt{gpt-4o-mini} on kto-mix-14k and UltraFeedback-Binary (UFB). We use $\beta=0.1$ for kto-mix-14k and $\beta=0.2$ for UFB.
  For both datasets, we calculate AWR for (i) our ELBO-KTO method, (ii) LLaDA-1.5, (iii) the dataset's chosen target, and (iv) the dataset's rejected target. Both output orderings are judged and ties are split equally.}
  \label{tab:winrates}
  \begin{tabular}{@{}lcc@{}}
    \toprule
    & kto-mix-14k & UFB \\
    \midrule
    ELBO-KTO  & \textbf{65.9} & \textbf{62.3} \\
    LLaDA-1.5 & 57.2 & 60.3 \\
    Chosen (label=True)    & 70.1          & 61.6 \\
    Rejected (label=False) & 47.3          & 40.0 \\
    \bottomrule
  \end{tabular}
\end{table}

\begin{table}[t]
\centering
\caption{Cross-generation judge comparison. Adjusted win rate (AWR, \%) with 90\% bootstrap CIs, majority-vote (MV), and Cohen’s $\kappa$.}
\label{tab:judge-robustness}
\begin{subtable}{\columnwidth}
\centering
\subcaption{kto-mix-14k}
\begin{tabular}{lcc}
\toprule
Judge & AWR (\%) & 90\% CI \\
\midrule
\texttt{gpt-4o-mini}  & 65.87 & [63.73, 68.00] \\
\texttt{gpt-4.1-mini} & 63.80 & [61.67, 66.07] \\
{MV (two judges)} & 63.07 & [61.20, 64.93] \\
\midrule
Cohen's $\kappa$ & 0.562 & [0.518, 0.604] \\
\bottomrule
\end{tabular}
\label{tab:judge-robustness-a}
\end{subtable}

\begin{subtable}{\columnwidth}
\centering
\vspace{2mm}
\subcaption{UltraFeedback-Binary}
\begin{tabular}{lcc}
\toprule
Judge & AWR (\%) & 90\% CI \\
\midrule
\texttt{gpt-4o-mini}  & 62.28 & [60.88, 63.73] \\
\texttt{gpt-4.1-mini} & 62.21 & [60.71, 63.68] \\
{MV (two judges)} & 61.38 & [60.11, 62.68] \\
\midrule
Cohen's $\kappa$ & 0.611 & [0.586, 0.636] \\
\bottomrule
\end{tabular}
\label{tab:judge-robustness-b}
\end{subtable}

\end{table}

We run an ablation comparing no baseline vs. the global per-batch baseline across three ($\beta$, learning-rate) settings under identical training and evaluation budgets. The global baseline consistently improves AWR by 5.27–9.34 percentage points (Table~\ref{tab:baseline-ablation}), indicating that centering margins with a stop-gradient batch mean stabilizes updates and yields better preference optimization.

To ensure our results are not an artifact of a particular LLM judge, we evaluated with two OpenAI models: \texttt{gpt-4o-mini} and \texttt{gpt-4.1-mini}, released about a year apart and trained under different alignment regimes. As shown in Table~\ref{tab:judge-robustness}, both judges report nearly the same adjusted win rates. The majority-vote rate sits close to the two judges, confirming consistency. Importantly, to measure the inter-judge agreement, we calculate Cohen's $\kappa$, a statistical measure of agreement between two annotators beyond chance. We obtain $\kappa=0.56$ on kto-mix-14k and $\kappa=0.61$ on UltraFeedback-Binary, indicating moderate-substantial agreement. These results demonstrate that our conclusions are robust to judge choice, even when the judges come from different generations of OpenAI models released one year apart.

\begin{table}[t]
\centering
\caption{Adjusted win rate (AWR, \%) with and without the global per-batch baseline across three $(\beta,\mathrm{lr})$ settings. Brackets show absolute gain (pp) vs.\ \emph{No baseline}.}
\label{tab:baseline-ablation}
\begin{tabular}{@{}cccc@{}}
\toprule
$\beta$ & LR & No baseline & Global mean \\
\midrule
0.2 & $5\times10^{-6}$ & 55.46 & \textbf{64.80} (+9.34) \\
0.1 & $1\times10^{-6}$ & 60.63 & \textbf{65.90} (+5.27) \\
0.2 & $1\times10^{-6}$ & 57.40 & \textbf{64.73} (+7.33) \\
\bottomrule
\end{tabular}
\end{table}

\subsection{KTO under class imbalance}
\label{sec:class-imbalance-results}

To test robustness under class imbalance, we subsample the kto-mix-14k dataset to vary the ratio of desirable to undesirable samples. We set $\lambda_D n_D = \lambda_U n_U$ to balance the effective contribution of each class in the loss, where $n_D$ and $n_U$ are the numbers of desired and undesired samples. Figure~\ref{fig:kto-imbalance} shows the adjusted win rate against the LLaDA-8B-Instruct model as we vary the class ratio.

Overall, we find that performance increases with more data for either class, consistent with KTO scaling more favorably when trained on larger sample sizes. However, the effect of imbalance is asymmetric. Reducing the number of desirable samples hurts performance much more severely than reducing the number of undesirable samples. This pattern mirrors the \emph{gain sensitivity} reported in ~\cite{ethayarajh2024ktomodelalignmentprospect}, where desirable examples contribute more strongly to alignment.

\subsection{Downstream generalization}
\label{sec:downstream-generaliztion-results}

We further evaluate ELBO-KTO by fine-tuning LLaDA-8B-Instruct on UltraFeedback-Binary and testing under identical decoding on {GSM8K}, {MMLU}, {HellaSwag}, {HumanEval}, and {GPQA}. As shown in Table~\ref{tab:downstream}, ELBO-KTO improves GSM8K (+3.26), slightly lifts MMLU (+0.58) and GPQA (+0.67), keeps HumanEval unchanged, and slightly drops HellaSwag ($-$0.75). Overall, ELBO-KTO preserves downstream performance while yielding modest gains on reasoning/knowledge benchmarks. Further evaluation details are provided in Appendix C.

\begin{table}
  \centering
  \caption{Performance comparison of LLaDA-8B-Instruct and ELBO-KTO on UltraFeedback-Binary on downstream tasks. The numbers in parentheses represent the number of shots used for evaluation.}
  \label{tab:downstream}
  \begin{tabular}{@{}lcc@{}}
    \toprule
    Task & LLaDA-8B-Instruct & ELBO-KTO \\
    \midrule
    GSM8K (5)     & 79.53 & \textbf{82.79} \\
    MMLU (5)      & 63.85 & \textbf{64.43} \\
    HellaSwag (0)  & \textbf{78.03} & 77.28 \\
    HumanEval (0)  & 42.68 & 42.68 \\
    GPQA (5)      & 29.02 & \textbf{29.69} \\
    \bottomrule
  \end{tabular}
\end{table}

\section{Related Work}

\paragraph{Image Diffusion Model Alignment}  
Preference optimization for diffusion models on image data has received significant attention. 
\citet{wallace2024diffusion} propose \emph{Diffusion-DPO}, which adapts DPO to diffusion models by defining a diffusion-based likelihood and directly optimizing over paired human preferences. 
\citet{hong2024margin} introduce a margin-aware preference optimization method that removes the need for a reference model, while \citet{lee2025calibrated} (CaPO) further improve alignment through reward calibration and Pareto-frontier-based pair selection. 
Similarly, \citet{li2024aligning} present \emph{Diffusion-KTO}, which aligns text-to-image diffusion models by maximizing expected human utility. 
Together, these approaches advance preference alignment in text-to-image diffusion models, whereas our proposed method specifically targets diffusion \emph{language} models.

\paragraph{Alignment for Diffusion Language Models}  
Diffusion language models are emerging as promising alternatives to autoregressive LLMs, including architectures such as DiffuLLaMA \citep{gong2024scaling}, Dream \citep{ye2025dream}, LLaDA \citep{nie2025largelanguagediffusionmodels}, and Mercury \citep{khanna2025mercury}.  
Recent work has begun exploring how to align dLLMs with preferences or reasoning objectives. The most closely related method is VRPO in LLaDA 1.5 \citep{zhu2025llada15variancereducedpreference}, which still operates under the paired preference regime and uses variance reduction to stabilize learning.  
Other diffusion-LM works emphasize reasoning or model structure rather than alignment:  
\citet{zhao2025d1} scale reasoning in dLLMs via reinforcement learning,  
\citet{tang2025wd1} apply weighted policy optimization for reasoning in dLLMs,  
\citet{yang2025mmada} explore masked diffusion models in code generation or multimodal settings (focusing on model performance rather than feedback alignment), and  
\citet{han2025discrete} propose trajectory-level alignment via stepwise decomposition of discrete diffusion processes.  
While these advance reasoning, architecture, or decoding, they generally do not support preference alignment from unpaired feedback. Our proposed ELBO-KTO method fills in this gap.

\section{CONCLUSION}

We introduced a simple and effective recipe for aligning diffusion LLMs with unpaired feedback by plugging ELBO surrogates into KTO’s value formulation: an ELBO margin for the policy–reference contrast and a mini-batch ELBO baseline, with VRPO-style variance reduction for stability. On LLaDA-8B-Instruct, the approach achieves strong automatic-judge gains on two test sets (65.9\% on kto-mix-14k, 62.3\% on UltraFeedback-Binary) and shows modest improvements on downstream tasks. Together, these results indicate that unpaired preference optimization is a viable path for diffusion LLM alignment and a practical complement to paired alignment methods.

Limitations include reliance on an ELBO surrogate (with potential bias/variance). Future work includes human evaluation, broader downstream tasks, principled choices for class weights $(\lambda_D,\lambda_U)$, stronger variance-reduction schemes, and combining unpaired ELBO-KTO with paired or reward-model objectives in dLLM settings.

\clearpage
\bibliographystyle{iclr2026_conference}

\appendix

\appendix
\thispagestyle{empty}


\onecolumn

\section{ADDITIONAL FORMULATION}
\label{sec:appendix-a}
Diffusion Language Models (dLLMs) are inherently Masked Diffusion Models (MDMs), which incorporate a discrete random masking forward process and train a mask predictor to approximate the reverse unmasking process \citep{sahoo2024simpleeffectivemaskeddiffusion, ou2024your, austin2023structureddenoisingdiffusionmodels, lou2310discrete, nie2025largelanguagediffusionmodels}.

\subsection{Forward Process for Diffusion Language Models}
During the forward process of MDMs, an original sequence $y$ is progressively corrupted by masking tokens independently at a noise level $t \in [0,1]$. Let $x$ be the prompt, $y$ be the original response to the prompt, $y^i$ denote the $i-$th token of $y$, $L$ denote the total number of tokens in $y$, $K$ denote the vocabulary size, and $\mathbf{M}$ denote the mask token. The forward process $q$ to obtain the masked response $y_t$ at time $t$ can be formulated as
\begin{equation}
    q(y_t \mid y,x,t)  = \prod_{i=1}^{L}q(y_t^i \mid y^i, x, t),
\end{equation}

where

\begin{equation*}
    q(y_t^i \mid y^i, x, t) = 
\begin{cases}
    t, \qquad &y_t^i=\mathbf{M}, \\
    1-t,\qquad &y_t^i = y^i
\end{cases}.
\end{equation*}

\subsection{Reverse Process for Diffusion Language Models}
The reverse process starts at $t=1$ from a fully masked sequence and gradually unmasks tokens till $t=0$ to recover a fully unmasked language sequence. Let $p_\theta$ be the mask prediction models. Then, for timesteps $0 \le s < t \le 1$, the conditional distribution for the reverse process can be defined as
\begin{equation}
\label{eq:mdm-reverse}
q(y_s \mid s,t,y_t,x)
=\prod_{i=1}^L q\!\left(y_s^{\,i}\mid s,t,y_t,x\right)
\end{equation}
\begin{equation}
q\!\left(y_s^{\,i}\mid s,t,y_t,x\right)=
\begin{cases}
\displaystyle \frac{t-s}{t}\;p_\theta\!\left(y^{\,i}\mid y_t,x\right), & y_t^{\,i}=\mathbf{M}\ \wedge\ y_s^{\,i}\neq \mathbf{M},\\[8pt]
\displaystyle \frac{s}{t}, & y_t^{\,i}=\mathbf{M}\ \wedge\ y_s^{\,i}=\mathbf{M},\\[6pt]
1, & y_t^{\,i}\neq \mathbf{M}\ \wedge\ y_s^{\,i}=y_t^{\,i},\\[4pt]
0, & \text{otherwise.}
\end{cases}
\end{equation}

\subsection{Log-Likelihood ELBO}
The exact log-likelihood $\log \pi(y\mid x)$ intractable for dLLMs because of the nature of forward and reverse process. To tackle this, the log-likelihood is usually approximated by its ELBO \citep{lou2310discrete, ou2024your, nie2025largelanguagediffusionmodels}:
\begin{equation}
\label{eq:elbo-t}
\mathcal{B}_\pi(y\mid x) \triangleq
\mathbb{E}_{t\sim \mathcal{U}[0,1]}\;
\mathbb{E}_{y_t \sim q(y_t \mid t,y,x)}\,
\ell_\pi(y_t,t,y\mid x),
\end{equation}
where
\begin{equation}
\label{eq:ell-t}
\ell_\pi(y_t,t,y\mid x)
\;\triangleq\;
\left[
\frac{1}{t}\sum_{i=1}^L
\mathbf{1}\!\left[y_t^{\,i}=\mathbf{M}\right]\;
\log p_\theta\!\left(y^{\,i}\mid y_t,x\right)
\right]
\end{equation}
is the per-step loss of the mask prediction model.

Let $l$ be uniformly sampled from $\{1,2,\ldots,L\}$ and $y_l$ denote the sequence obtained by masking $l$ tokens without replacement. Using this terminology, 
\cite{zhu2025llada15variancereducedpreference, ou2024your} define another equivalent formulation for the ELBO approximation:
\begin{equation}
\label{eq:elbo-l}
\mathcal{B}'_\pi(y\mid x)
\;\triangleq\;
\mathbb{E}_{l \sim \mathcal{U}(\{1,2,\ldots,L\})}\;
\mathbb{E}_{y_l \sim q(y_l \mid l,y,x)}\,
\ell'_\pi(y_l,l,y\mid x),
\end{equation}
where
\begin{equation}
\label{eq:ell-l}
\ell'_\pi(y_l,l,y\mid x)
\;\triangleq\;
\left[
\frac{L}{l}\sum_{i=1}^L
\mathbf{1}\!\left[y_l^{\,i}=\mathbf{M}\right]\;
\log p_\theta\!\left(y^{\,i}\mid y_l,x\right)
\right].
\end{equation}
Following \cite{zhu2025llada15variancereducedpreference}, we adopt the $\mathcal{B}'_\pi(y\mid x)$ formulation for our experiments. Although $\mathcal{B}_\pi(y\mid x)$ and $\mathcal{B}'_\pi(y\mid x)$ are equivalent in expectation, the latter generally exhibits lower variance during estimation. This improvement arises because Eq.~\eqref{eq:elbo-l} deterministically masks exactly $l$ out of $L$ tokens in each sequence, producing more consistent samples, whereas Eq.~\eqref{eq:elbo-t} masks an expected fraction $t$ of tokens, introducing greater stochasticity. Consequently, we also employ the $\mathcal{B}'$ formulation in Eq.~\eqref{eq:elbo-l} as our log-likelihood estimator.

\section{THEORETICAL ANALYSIS AND PROOFS}
\label{sec:appendix-b}

\subsection{Setup and Notation}
\label{sec:setup-notation}

\paragraph{Dataset and Minibatch.}
Let the finite dataset be $\mathcal{D}_N=\{(x_n,y_n)\}_{n=1}^N$, where $x_n$ is prompt and $y_n$ is response.
A minibatch is $S=\{(x_i,y_i,s_i,\lambda_i)\}_{i=1}^m$, sampled uniformly. Here $s_i\in\{-1,+1\}$ denotes undesirable/desirable,
and $\lambda_i\in\{\lambda_D,\lambda_U\}$ are class weights with
$\lambda_{\max}=\max\{\lambda_D,\lambda_U\}$.

\paragraph{Policies and ELBO Plug-In.}
Let $\pi_\theta(\cdot\mid x)$ be the current policy and
$\pi_{\mathrm{ref}}(\cdot\mid x)$ be the frozen reference.
We use an ELBO surrogate $B_\pi(y\mid x)$ and an unbiased MC estimator
$\widehat B_\pi(y\mid x)$ satisfying
$\Emc[\widehat B_\pi(y\mid x)]=B_\pi(y\mid x)$.

\paragraph{ELBO Margins and Global Baseline.}
Define the ELBO margin as
\begin{equation}
    r_i \;:=\; B_{\pi_\theta}(y_i\mid x_i)-B_{\pi_{\mathrm{ref}}}(y_i\mid x_i),
\end{equation}
and its MC estimate as
\begin{equation}
    \hat r_i \;:=\; \widehat B_{\pi_\theta}(y_i\mid x_i)-\widehat B_{\pi_{\mathrm{ref}}}(y_i\mid x_i).
\end{equation}
The global mini-batch baseline is
$\hat b_0=\tfrac{1}{m}\sum_{j=1}^m \hat r_j$, and the signed centered margin is
$\hat\delta_i=s_i(\hat r_i-\hat b_0)$.
For gradients, we treat $\hat b_0$ and the reference term as stop-grad.

\paragraph{Objective and Target.}
With logistic link $g(u)=\sigma(\beta u)$, where $\sigma(x) = \frac{1}{1+e^{-x}}$, the per-item and batch losses are
\begin{align}
\hat\ell_i \;=\; \lambda_i\big(1-g(\hat\delta_i)\big),\qquad
\hat L(S) \;=\; \tfrac{1}{m}\sum_{i=1}^m \hat\ell_i.
\end{align}
The corresponding noise-free target \emph{global-baseline} replaces the MC estimates by expectations:
\begin{align}
L^{\mathrm{sg}}_{\mathrm{GB}}(S;\theta)
=\tfrac{1}{m}\sum_{i=1}^m \lambda_i\big(1-g\big(s_i(r_i-b_0)\big)\big),
\end{align}
where $b_0=\Emc[\hat b_0] = \tfrac{1}{m}\sum_{j=1}^m r_j$.

\paragraph{Lipschitz Constants.}
We use $L_g$ to denote Lipschitz constant of $g$ and $L_{g'}$ to denote the Lipschitz constant of $g'$. Since $g=\sigma(\beta u)$, the values of these constants come out to be $L_g = \beta/4$ and $L_{g'}=\beta^2/(6\sqrt{3})$.

\begin{lemma}[Lipschitz constants of the scaled sigmoid]
\label{lem:lipschitz-sigmoid}
Let $g(u)=\sigma(\beta u)$ with $\sigma(z)=\tfrac{1}{1+e^{-z}}$ and $\beta>0$.
Then $g$ is $L_g$-Lipschitz with $L_g=\beta/4$, and its derivative $g'$ is
$L_{g'}$-Lipschitz with $L_{g'}=\beta^2/(6\sqrt{3})$.
\end{lemma}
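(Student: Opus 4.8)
The plan is to reduce both claims to pointwise bounds on derivatives, using the elementary fact that a continuously differentiable function on $\mathbb{R}$ is $L$-Lipschitz if and only if its derivative is bounded in absolute value by $L$ (mean value theorem). First I would compute $g'(u) = \beta\,\sigma'(\beta u) = \beta\,\sigma(\beta u)\bigl(1-\sigma(\beta u)\bigr)$, using the standard identity $\sigma'(z) = \sigma(z)(1-\sigma(z))$. Writing $p := \sigma(\beta u)\in(0,1)$, this equals $\beta\,p(1-p)$, and the elementary inequality $p(1-p)\le 1/4$ (equality at $p=1/2$, i.e.\ $u=0$) gives $|g'(u)|\le\beta/4$. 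Hence $g$ is $(\beta/4)$-Lipschitz, and since the bound is attained, $L_g=\beta/4$.

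For the Lipschitz constant of $g'$, I would differentiate once more: $g''(u)=\beta^2\,\sigma''(\beta u)$, and using $\sigma''(z)=\sigma'(z)\bigl(1-2\sigma(z)\bigr)=\sigma(z)(1-\sigma(z))(1-2\sigma(z))$, again with $p=\sigma(\beta u)$, we get $g''(u)=\beta^2\,p(1-p)(1-2p)$. The remaining task is to maximize $\phi(p):=\lvert p(1-p)(1-2p)\rvert$ over $p\in(0,1)$. The substitution $q:=1-2p\in(-1,1)$ makes this transparent: $p(1-p)=(1-q^2)/4$, so $\phi=\lvert q(1-q^2)\rvert/4=\lvert q-q^3\rvert/4$. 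Setting the derivative $1-3q^2$ to zero yields $q=\pm 1/\sqrt{3}$, at which $\lvert q-q^3\rvert=\tfrac{1}{\sqrt3}\bigl(1-\tfrac13\bigr)=\tfrac{2}{3\sqrt3}$; the endpoints $q=\pm 1$ give $0$, so this is the global maximum. Therefore $\phi(p)\le\tfrac14\cdot\tfrac{2}{3\sqrt3}=\tfrac{1}{6\sqrt3}$, whence $\lvert g''(u)\rvert\le\beta^2/(6\sqrt3)$, so $g'$ is $\bigl(\beta^2/(6\sqrt3)\bigr)$-Lipschitz, and since the bound is attained (at $p=(1\mp 1/\sqrt3)/2$) we get $L_{g'}=\beta^2/(6\sqrt3)$.

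The only step requiring genuine care is the cubic optimization for the second-derivative bound; everything else is the routine mean-value characterization of Lipschitz constants together with the chain rule for the scaling factor $\beta$. The $q=1-2p$ substitution is what does the work: it turns the constrained maximization of a cubic on $(0,1)$ into the maximization of an odd cubic on $(-1,1)$, which a single critical-point computation settles. I would also note that both extremal points lie in the interior of the domain, so there is no boundary subtlety, and that the attainment of both bounds confirms the stated constants are sharp.
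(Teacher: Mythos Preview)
Your proposal is correct and follows essentially the same approach as the paper: bound $|g'|$ via $p(1-p)\le 1/4$ and bound $|g''|$ by maximizing the cubic $p(1-p)(1-2p)$, then invoke the mean-value theorem. The only cosmetic difference is that you use the substitution $q=1-2p$ to locate the critical points, whereas the paper differentiates $h(p)$ directly and solves $1-6p(1-p)=0$; both routes yield the same extremizers $p_\star=\tfrac12(1\pm 1/\sqrt3)$ and the same value $1/(6\sqrt3)$.
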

\begin{proof}
Since $g'(u)=\beta\,\sigma(\beta u)(1-\sigma(\beta u))$
and $\sigma(z)(1-\sigma(z))\le\tfrac14$ for all $z$,
we have $\sup_u|g'(u)|=\beta/4$,
so $g$ is $\beta/4$-Lipschitz.

For $g'$, differentiate once more:
\[
g''(u)
=\beta^2\,\sigma(\beta u)(1-\sigma(\beta u))(1-2\sigma(\beta u)).
\]
Let $p=\sigma(\beta u)\in(0,1)$ and define $h(p)=p(1-p)(1-2p)$.
The extrema of $|h(p)|$ satisfy $h'(p)=0$, where
$h'(p)=1-6p(1-p)=0\!\implies\! p_\star=\tfrac12(1\pm 1/\sqrt3)$.
At these points, $|h(p_\star)|=1/(6\sqrt3)$,
so $\sup_u|g''(u)|=\beta^2/(6\sqrt3)$.
By the mean-value theorem,
$|g'(u)-g'(v)|\le (\sup_t|g''(t)|)\,|u-v|$,
giving $L_{g'}=\beta^2/(6\sqrt3)$.
\end{proof}

\begin{lemma}[Centered-Margin Variance Aggregator]
\label{lem:delta-i-variance}
Fix a batch $S=\{(x_i,y_i,s_i,\lambda_i)\}_{i=1}^m$ and assume the MC design renders
$(\widehat r_1,\ldots,\widehat r_m)$ exchangeable when conditioning on $S$.
Let $v(S):=\Var_{\mathrm{MC}}(\widehat r_i)$ and $c(S):=\Cov_{\mathrm{MC}}(\widehat r_i,\widehat r_j)$ for $i\neq j$.
With $\widehat b_0=\tfrac{1}{m}\sum_{j=1}^m \widehat r_j$ and $\widehat\delta_i=s_i\big(\widehat r_i-\widehat b_0\big)$, we have
\[
\Var_{\mathrm{MC}}(\widehat\delta_i)
=\frac{m-1}{m}\big(v(S)-c(S)\big),
\]
and consequently,
\[
\Psi(S)
:=\tfrac{1}{m}\sum_{i=1}^m
\mathbb{E}_{\mathrm{MC}}\!\big[\Var_{\mathrm{MC}}(\widehat\delta_i)\big]
=\frac{m-1}{m}\big(v(S)-c(S)\big).
\]
\end{lemma}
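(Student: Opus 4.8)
The plan is to reduce the claim to a single variance computation for the centered estimator $\widehat r_i-\widehat b_0$ and then exploit the exchangeable covariance structure. First I would note that since $s_i\in\{-1,+1\}$ we have $s_i^2=1$, hence $\Var_{\mathrm{MC}}(\widehat\delta_i)=\Var_{\mathrm{MC}}\!\big(\widehat r_i-\widehat b_0\big)$ and the sign plays no role. Moreover, under the conditioning convention adopted in the setup, $\Var_{\mathrm{MC}}(\widehat\delta_i)$ is a deterministic function of the batch $S$, so $\mathbb{E}_{\mathrm{MC}}\!\big[\Var_{\mathrm{MC}}(\widehat\delta_i)\big]=\Var_{\mathrm{MC}}(\widehat\delta_i)$, which already collapses the outer expectation in the definition of $\Psi(S)$; the average over $i$ then becomes trivial once the per-item value is shown to be independent of $i$.

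The core step is to evaluate $\Var_{\mathrm{MC}}\!\big(\widehat r_i-\widehat b_0\big)$. I would stack $\widehat{\mathbf{r}}=(\widehat r_1,\dots,\widehat r_m)^{\top}$; exchangeability yields the covariance matrix $\Cov_{\mathrm{MC}}(\widehat{\mathbf{r}})=(v(S)-c(S))\,I_m+c(S)\,\mathbf{1}\mathbf{1}^{\top}$. Writing $\widehat r_i-\widehat b_0=e_i^{\top}P\,\widehat{\mathbf{r}}$ with the centering matrix $P=I_m-\tfrac1m\mathbf{1}\mathbf{1}^{\top}$, and using $P\mathbf{1}=0$ together with $P^2=P$, the rank-one term is annihilated and $P\,\Cov_{\mathrm{MC}}(\widehat{\mathbf{r}})\,P=(v(S)-c(S))\,P$. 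Hence $\Var_{\mathrm{MC}}\!\big(\widehat r_i-\widehat b_0\big)=(v(S)-c(S))\,e_i^{\top}Pe_i=(v(S)-c(S))\big(1-\tfrac1m\big)=\tfrac{m-1}{m}\big(v(S)-c(S)\big)$, which is independent of $i$. Equivalently, one can avoid matrices by writing $\widehat r_i-\widehat b_0=\tfrac1m\sum_{j\ne i}(\widehat r_i-\widehat r_j)$ and expanding: each $\Var_{\mathrm{MC}}(\widehat r_i-\widehat r_j)=2(v(S)-c(S))$ and each cross term $\Cov_{\mathrm{MC}}(\widehat r_i-\widehat r_j,\widehat r_i-\widehat r_k)=v(S)-c(S)$ for $j\ne k$, so summing the $m-1$ diagonal and $(m-1)(m-2)$ ordered off-diagonal contributions and dividing by $m^2$ again gives $\tfrac{m-1}{m}(v(S)-c(S))$.

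Finally, I would substitute into $\Psi(S)=\tfrac1m\sum_{i=1}^m\mathbb{E}_{\mathrm{MC}}\!\big[\Var_{\mathrm{MC}}(\widehat\delta_i)\big]$: since every summand equals $\tfrac{m-1}{m}(v(S)-c(S))$, the average equals the same value, which is the claimed identity.

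I do not expect a genuine obstacle here; the only thing requiring care is the covariance bookkeeping — either correctly invoking idempotence of $P$ and $P\mathbf{1}=0$ in the matrix route, or tracking that $\sum_{j\ne i}$ contributes $m-1$ diagonal and $(m-1)(m-2)$ ordered off-diagonal covariance terms in the elementary route. A minor point worth stating explicitly in the write-up is why the outer $\mathbb{E}_{\mathrm{MC}}$ is vacuous, namely that $\Var_{\mathrm{MC}}(\widehat\delta_i)$ is already a constant given $S$ under the paper's conditioning convention, so there is no MC noise left to average over.
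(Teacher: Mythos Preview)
Your proposal is correct and follows essentially the same logical skeleton as the paper's proof: drop the sign via $s_i^2=1$, compute $\Var_{\mathrm{MC}}(\widehat r_i-\widehat b_0)$ under the exchangeable covariance structure, observe the result is independent of $i$, and note that the outer $\mathbb{E}_{\mathrm{MC}}$ is vacuous because the variance is already deterministic given $S$.

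The only difference is the algebraic organization of the variance computation. The paper writes $\widehat r_i-\widehat b_0=\tfrac{m-1}{m}\widehat r_i-\tfrac{1}{m}\sum_{j\ne i}\widehat r_j$ and expands via bilinearity, computing $\Var\big(\sum_{j\ne i}\widehat r_j\big)=(m-1)v+(m-1)(m-2)c$ and $\Cov\big(\widehat r_i,\sum_{j\ne i}\widehat r_j\big)=(m-1)c$ explicitly before simplifying. Your matrix route with the centering projector $P=I_m-\tfrac{1}{m}\mathbf{1}\mathbf{1}^{\top}$ is a cleaner packaging: $P\mathbf{1}=0$ kills the rank-one piece of the exchangeable covariance in one stroke, and idempotence of $P$ gives the diagonal entry $1-\tfrac{1}{m}$ immediately, with no intermediate bookkeeping. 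Your pairwise-difference alternative is closer in spirit to the paper's direct expansion. Either way, nothing is missing and no step would fail.
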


\begin{proof}
Since $s_i^2=1$, $\Var_{\mathrm{MC}}(\widehat\delta_i)
=\Var_{\mathrm{MC}}(\widehat r_i-\widehat b_0)$.
Write
\[
\widehat r_i-\widehat b_0
=\Big(1-\tfrac{1}{m}\Big)\widehat r_i
-\tfrac{1}{m}\sum_{j\neq i}\widehat r_j
=:a\,\widehat r_i+b\sum_{j\neq i}\widehat r_j,
\]
with $a=\tfrac{m-1}{m}$ and $b=-\tfrac{1}{m}$.
By bilinearity of covariance,
\[
\Var_{\mathrm{MC}}(\widehat r_i-\widehat b_0)
= a^2\,\Var_{\mathrm{MC}}(\widehat r_i)
+ b^2\,\Var_{\mathrm{MC}}\!\Big(\sum_{j\neq i}\widehat r_j\Big)
+ 2ab\,\Cov_{\mathrm{MC}}\!\Big(\widehat r_i,\sum_{j\neq i}\widehat r_j\Big).
\]
Exchangeability of $(\widehat r_1,\ldots,\widehat r_m)$ implies
\[
\Var_{\mathrm{MC}}\!\Big(\sum_{j\neq i}\widehat r_j\Big)
=(m-1)v(S)+(m-1)(m-2)c(S),\qquad
\Cov_{\mathrm{MC}}\!\Big(\widehat r_i,\sum_{j\neq i}\widehat r_j\Big)
=(m-1)c(S).
\]
Substituting and simplifying with $a=\tfrac{m-1}{m}$ and $b=-\tfrac{1}{m}$ yields
\[
\Var_{\mathrm{MC}}(\widehat r_i-\widehat b_0)
=\frac{m-1}{m}\big(v(S)-c(S)\big).
\]

Conditioned on $S$, $\widehat\delta_i$ depends only on MC randomness,
so $\mathbb{E}_{\mathrm{MC}}\!\big[\Var_{\mathrm{MC}}(\widehat\delta_i)\big]
=\Var_{\mathrm{MC}}(\widehat\delta_i)$.
Since this value is identical for all $i$, averaging gives
\[
\Psi(S)
=\tfrac{1}{m}\sum_{i=1}^m \Var_{\mathrm{MC}}(\widehat\delta_i)
=\frac{m-1}{m}\big(v(S)-c(S)\big).
\]
\end{proof}

The result relies only on exchangeability
(common variance $v(S)$ and common covariance $c(S)$)
and centering by the batch mean $\widehat b_0$.
The signs $s_i\in\{-1,+1\}$ do not affect the variance
since $s_i^2=1$.

\subsection{Loss Bias}

\begin{restatedtheorem}{thm:loss-bias}
The minibatch loss bias relative to the global-baseline target satisfies
\begin{equation*}
\begin{aligned}
\Big|
\ED\Emc[\hat L(S)]
- \ED[L^{\mathrm{sg}}_{\mathrm{GB}}(S;\theta)]
\Big|
\;\le\;
\lambda_{\max} L_g\;
\ED\!\big[\sqrt{\Psi(S)}\big],
\end{aligned}
\end{equation*}
where $\Psi(S)$ is the centered-margin variance
aggregator defined in~\eqref{eq:Psi}. Here $|\cdot|$ refers to the absolute value.
\end{restatedtheorem}

\begin{proof}
We have
\[
\widehat L(S)=\tfrac{1}{m}\sum_{i=1}^m \lambda_i\big(1-g(\widehat\delta_i)\big),
\qquad
L^{\mathrm{sg}}_{\mathrm{GB}}(S;\theta)=\tfrac{1}{m}\sum_{i=1}^m \lambda_i\big(1-g(\delta_i^{\mathrm{gb}})\big),
\]
with $\widehat\delta_i=s_i(\widehat r_i-\widehat b_0)$, $\delta_i^{\mathrm{gb}}=s_i(r_i-b_0)$,
$\widehat b_0=\tfrac{1}{m}\sum_{j=1}^m \widehat r_j$, and $b_0=\tfrac{1}{m}\sum_{j=1}^m r_j$.
Since $L^{\mathrm{sg}}_{\mathrm{GB}}(S;\theta)$ is deterministic under MC randomness for fixed $S$,
\[
\Big|\ED\Emc[\widehat L(S)]-\ED[L^{\mathrm{sg}}_{\mathrm{GB}}(S;\theta)]\Big|
= \Big|\ED\big[\Emc[\widehat L(S)]-L^{\mathrm{sg}}_{\mathrm{GB}}(S;\theta)\big]\Big|
\;\le\; \ED\Big[\big|\Emc[\widehat L(S)]-L^{\mathrm{sg}}_{\mathrm{GB}}(S;\theta)\big|\Big],
\]
where the inequality uses $|\E[Z]|\le \E[|Z|]$ (triangle inequality for $|\cdot|$).
Expanding $\widehat L$ and $L^{\mathrm{sg}}_{\mathrm{GB}}$ and applying $|g(u)-g(v)|\le L_g|u-v|$ then yields
\[
\Big|\ED\Emc[\widehat L]-\ED[L^{\mathrm{sg}}_{\mathrm{GB}}]\Big|
\le L_g\;\ED\Emc\!\left[\frac{1}{m}\sum_{i=1}^m \lambda_i\,\big|\widehat\delta_i-\delta_i^{\mathrm{gb}}\big|\right]
\le \lambda_{\max}L_g\;\ED\Emc\!\left[\frac{1}{m}\sum_{i=1}^m \big|\widehat\delta_i-\delta_i^{\mathrm{gb}}\big|\right].
\]
Conditioned on $S$, $\Emc[\widehat r_i]=r_i$ and $\Emc[\widehat b_0]=b_0$, so $\Emc[\widehat\delta_i]=\delta_i^{\mathrm{gb}}$ and
$\big|\widehat\delta_i-\delta_i^{\mathrm{gb}}\big|=\big|\widehat\delta_i-\Emc[\widehat\delta_i]\big|$.
By Cauchy–Schwarz,
\[
\Emc\big|\widehat\delta_i-\Emc[\widehat\delta_i]\big|
\le \sqrt{\Emc\!\big[(\widehat\delta_i-\Emc[\widehat\delta_i])^2\big]}
= \sqrt{\Var_{\mathrm{MC}}(\widehat\delta_i)},
\]
since $\big(\Emc|X|\big)^2\le \Emc[X^2]\Emc[1]=\Emc[X^2]$ for $X=\widehat\delta_i-\Emc[\widehat\delta_i]$.
Therefore,
\[
\Big|\ED\Emc[\widehat L]-\ED[L^{\mathrm{sg}}_{\mathrm{GB}}]\Big|
\le \lambda_{\max}L_g\;\ED\!\left[\frac{1}{m}\sum_{i=1}^m \sqrt{\Var_{\mathrm{MC}}(\widehat\delta_i)}\right].
\]
Using concavity of $\sqrt{\cdot}$ for fixed $S$,
\[
\frac{1}{m}\sum_{i=1}^m \sqrt{\Var_{\mathrm{MC}}(\widehat\delta_i)}
\le \sqrt{\frac{1}{m}\sum_{i=1}^m \Var_{\mathrm{MC}}(\widehat\delta_i)},
\]
and noting that the variances are deterministic under MC conditioning, we identify
\[
\frac{1}{m}\sum_{i=1}^m \Var_{\mathrm{MC}}(\widehat\delta_i)
= \frac{1}{m}\sum_{i=1}^m \Emc\!\big[\Var_{\mathrm{MC}}(\widehat\delta_i)\big]
= \Psi(S).
\]
Taking $\ED$ completes the claim:
\[
\Big|\ED\Emc[\widehat L(S)]-\ED[L^{\mathrm{sg}}_{\mathrm{GB}}(S;\theta)]\Big|
\le \lambda_{\max} L_g\;\ED\!\big[\sqrt{\Psi(S)}\big].
\]
\end{proof}

\subsection{Loss Variance}

\begin{restatedtheorem}{thm:loss-var}
For the minibatch loss $\hat L(S)=\tfrac{1}{m}\sum_{i=1}^m \lambda_i\big(1-g(\hat\delta_i)\big)$
with $g(u)=\sigma(\beta u)$,
the MC-induced variance satisfies
\begin{align*}
\Var_{\mathrm{MC}}(\hat L(S))
\;\le\;
(\lambda_{\max}L_g)^2\;
\ED\!\big[\Psi(S)\big].
\end{align*}
\end{restatedtheorem}

\begin{proof}
Fix a batch $S$ and write $Z_i:=\lambda_i\,g(\widehat\delta_i)$ so that
\[
\widehat L(S)\;=\;\tfrac1m\sum_{i=1}^m \lambda_i\big(1-g(\widehat\delta_i)\big)
\qquad\Longrightarrow\qquad
\Var_{\mathrm{MC}}(\widehat L(S))=\Var_{\mathrm{MC}}\!\Big(\tfrac1m\sum_{i=1}^m Z_i\Big),
\]
since subtracting the constant $\tfrac1m\sum_i \lambda_i$ does not affect variance. Using $\Cov_{\mathrm{MC}}(Z_i,Z_j) \leq \sqrt{\Var_{\mathrm{MC}}(Z_i)\Var_{\mathrm{MC}}(Z_j)}$,
\begin{align*}
    \Var_{\mathrm{MC}}\!\Big(\tfrac1m\sum_{i=1}^m Z_i\Big)
&=\frac{1}{m^2}\Big(\sum_{i=1}^m \Var_{\mathrm{MC}}(Z_i)
+2\!\sum_{1\le i<j\le m}\!\Cov_{\mathrm{MC}}(Z_i,Z_j)\Big) \\
&\leq\frac{1}{m^2}\Big(\sum_{i=1}^m \Var_{\mathrm{MC}}(Z_i)
+2\!\sum_{1\le i<j\le m}\!\sqrt{\Var_{\mathrm{MC}}(Z_i)\Var_{\mathrm{MC}}(Z_j)}\Big) \\
&= \frac{1}{m^2}\Big(\sum_{i=1}^m \sqrt{\Var_{\mathrm{MC}}(Z_i)}\Big)^{\!2}.
\end{align*}

Applying Cauchy–Schwarz again in the form $(\sum_i a_i)^2\le m\sum_i a_i^2$ with
$a_i=\sqrt{\Var_{\mathrm{MC}}(Z_i)}$ gives
\[
\Var_{\mathrm{MC}}\!\Big(\tfrac1m\sum_{i=1}^m Z_i\Big)
\;\le\; \frac{1}{m}\sum_{i=1}^m \Var_{\mathrm{MC}}(Z_i).
\]
Next, use Lipschitz contraction for each $i$:
since $u\mapsto \lambda_i g(u)$ is $(\lambda_i L_g)$-Lipschitz,
\[
\Var_{\mathrm{MC}}(Z_i)
=\Var_{\mathrm{MC}}(\lambda_i g(\widehat\delta_i))
\;\le\;(\lambda_i L_g)^2\,\Var_{\mathrm{MC}}(\widehat\delta_i),
\]
where we used $\Var(f(X))=\min_a \Emc[(f(X)-a)^2]
\le \Emc[(f(X)-f(\Emc X))^2]\le L^2 \Emc[(X-\Emc X)^2]=L^2\Var(X)$ for an $L$-Lipschitz $f$.
Hence,
\[
\Var_{\mathrm{MC}}(\widehat L(S))
\;\le\; \frac{(\lambda_{\max}L_g)^2}{m}\sum_{i=1}^m \Var_{\mathrm{MC}}(\widehat\delta_i)
\;=\;(\lambda_{\max}L_g)^2\,\Psi(S),
\]
using the definition (and MC-determinism) of $\Psi(S)=\tfrac{1}{m}\sum_{i=1}^m \Var_{\mathrm{MC}}(\widehat\delta_i)$.
Finally, taking $\ED[\cdot]$ over batches yields
\[
\ED\big[\Var_{\mathrm{MC}}(\widehat L(S))\big]
\;\le\;(\lambda_{\max}L_g)^2\,\ED[\Psi(S)].
\]
\end{proof}

\subsection{Gradient Bias}

\begin{restatedtheorem}{thm:grad-bias}
Let $\|\cdot\|$ be the $L_2$-norm.
If $\|\nabla_\theta r_i\|\le \bar G$ for all items, then
\begin{equation*}
\begin{aligned}
\bigl\|\ED&\Emc[\hat G(S)] - \ED[G(S)]\bigr\|
\le \\ &\lambda_{\max} L_{g'}\, \ED\!\big[\sqrt{\Psi(S)}\big]\, \bar G + \lambda_{\max} L_g\, \ED\!\big[\sqrt{\bar v_{\nabla}(S)}\big],
\end{aligned}
\end{equation*}

where $\bar v_{\nabla}(S) = \tfrac{1}{m}\sum_{i=1}^m
\Emc\!\big[\|\nabla_\theta \hat r_i - \nabla_\theta r_i\|^2\big]$, $\hat G(S)$ is the stochastic gradient for the minibatch $S$ as defined in \eqref{eq:g-hat-s}, and $G$ is defined in \eqref{eq:g}.
\end{restatedtheorem}

\begin{proof}
We start by expanding the difference of the expectations and splitting it into two parts:
\begin{align}
\ED\EMC[\hat G(S)] - \ED[G(S)]
&=\frac{1}{m}\sum_{i=1}^m
\ED\EMC\!\big[\hat a_i \nabla_\theta \hat r_i - a_i \nabla_\theta r_i\big]\nonumber\\[-2pt]
&=\frac{1}{m}\sum_{i=1}^m
\ED\!\Big(
\underbrace{\EMC[\hat a_i-a_i]}_{\text{(A) weight noise}}\;\nabla_\theta r_i
\;+\;
\underbrace{\EMC\!\big[\hat a_i(\nabla_\theta \hat r_i-\nabla_\theta r_i)\big]}_{\text{(B) score-gradient noise}}
\Big).
\label{eq:gb-decomp-grad-bias}
\end{align}

\paragraph{(A) Weight-noise term.}\label{ref:grad-bias-a}
Using $a_i=-\lambda_i s_i g'(\delta_i)$ and $\hat a_i=-\lambda_i s_i g'(\hat\delta_i)$ with $g'$ $L_{g'}$-Lipschitz,
\begin{align*}
\big|\EMC[\hat a_i-a_i]\big|
&\le \lambda_i\,\EMC\!\big[\,|g'(\hat\delta_i)-g'(\delta_i)|\,\big]
\le \lambda_i L_{g'}\,\EMC\!\big[\,|\hat\delta_i-\delta_i|\,\big]\nonumber\\
&\le \lambda_i L_{g'}\sqrt{\EMC\!\big[(\hat\delta_i-\delta_i)^2\big]}
= \lambda_i L_{g'}\sqrt{\EMC\!\big[\Var_{\mathrm{MC}}(\hat\delta_i)\big]},
\end{align*}
where we used Cauchy–Schwarz $\big(\EMC\big[|\hat\delta_i-\delta_i|.|1|\big] \leq \sqrt{\EMC\big[(\hat\delta_i-\delta_i)^2\big]\EMC\big[1^2\big]} \big)$ and $\EMC[\hat\delta_i]=\delta_i$. Hence,
\begin{align}
\frac{1}{m}\sum_{i=1}^m
\ED\!\Big(
\EMC[\hat a_i-a_i]\;\nabla_\theta r_i\big) &\leq\frac{1}{m}\sum_{i=1}^m \ED\!\big[\,|\EMC[\hat a_i-a_i]|\ \|\nabla_\theta r_i\|\,\big]\nonumber \\
&\le \frac{\lambda_{\max} L_{g'}}{m}\sum_{i=1}^m
\ED\!\Big[\sqrt{\EMC\Var_{\mathrm{MC}}(\hat\delta_i)}\,\|\nabla_\theta r_i\|\Big]\nonumber\\
&\le \lambda_{\max} L_{g'}\,\bar G\,
\ED\!\Big[\frac{1}{m}\sum_{i=1}^m \sqrt{\EMC\Var_{\mathrm{MC}}(\hat\delta_i)}\Big]\nonumber\\
&\le \lambda_{\max} L_{g'}\,\bar G\,\ED\!\big[\sqrt{\Psi(S)}\big],
\label{eq:termA-grad-bias}
\end{align}

using the concavity of $x\mapsto\sqrt{x}$, and where we (re)use
$\Psi(S):=\tfrac{1}{m}\sum_{i=1}^m \EMC\Var(\hat\delta_i)$.

\paragraph{(B) Score-gradient noise term.}
By Cauchy–Schwarz and the uniform bound $|g'(u)|\le L_g$,
\begin{align*}
\big\|\EMC[\hat a_i(\nabla_\theta \hat r_i-\nabla_\theta r_i)]\big\|
&\le \sqrt{\EMC[\hat a_i^2]}\;
\sqrt{\EMC\!\big[\|\nabla_\theta \hat r_i-\nabla_\theta r_i\|^2\big]}\nonumber\\
&\le \lambda_{\max} L_g\;
\sqrt{\EMC\!\big[\|\nabla_\theta \hat r_i-\nabla_\theta r_i\|^2\big]}.
\end{align*}
Averaging over $i$ and applying the same concavity argument gives
\begin{align}
\frac{1}{m}\sum_{i=1}^m \ED\!\Big[
\big\|\EMC[\hat a_i(\nabla_\theta \hat r_i-\nabla_\theta r_i)]\big\|
\Big]
\;\le\;
\lambda_{\max} L_g\;\ED\!\big[\sqrt{\bar v_{\nabla}(S)}\big],
\label{eq:termB-grad-bias}
\end{align}
with $\displaystyle \bar v_{\nabla}(S)=\tfrac{1}{m}\sum_{i=1}^m
\EMC\!\big[\|\nabla_\theta \hat r_i-\nabla_\theta r_i\|^2\big]$.

\paragraph{Conclusion.}
Combining \eqref{eq:termA-grad-bias} and \eqref{eq:termB-grad-bias} in \eqref{eq:gb-decomp-grad-bias} and applying the triangle inequality yields
\[
\Big\|\ED\EMC[\hat G(S)] - \ED[G(S)]\Big\|
\le
\lambda_{\max} L_{g'}\, \ED\!\big[\sqrt{\Psi(S)}\big]\, \bar G
\;+\;
\lambda_{\max} L_g\, \ED\!\big[\sqrt{\bar v_{\nabla}(S)}\big],
\]
which is the desired bound.
\end{proof}

This result shows that gradient bias has two contributions:
weight noise and score-gradient noise. Weight noise arises from applying $g$ to noisy centered margin $\hat \delta_i$, and is controlled by
$\Psi(S)$. Score-gradient noise arises from stochastic ELBO gradients
controlled by $\bar v_{\nabla}(S)$, which can be reduced by increasing the MC budget.

\subsection{Gradient Variance}

Let
\begin{equation*}
    \begin{gathered}
        \tilde G^2(S) = \tfrac{1}{m}\sum_{i=1}^m \Emc\!\big[\|\nabla_\theta \hat r_i\|^2\big], \\
\bar c_{\nabla}(S)=\tfrac{1}{m(m-1)}\sum_{i\ne j}\Emc[\langle \xi_i,\xi_j\rangle],
    \end{gathered}
\end{equation*}
where $\xi_i = \nabla_\theta \hat r_i - \nabla_\theta r_i$, and assume unbiased ELBO gradients $\Emc[\nabla_\theta \hat r_i]=\nabla_\theta r_i$.

\begin{restatedtheorem}{thm:grad-var}
For mini-batch $S$, let $U(S)=\frac{1}{m}\sum_{i=1}^m a_i\,\nabla_\theta \hat r_i$.
Then
\begin{equation*}
\begin{aligned}
&\Var_{\mathrm{MC}}\!\big(\hat G(S)\big)
\le\; \\
&\qquad \Big(\sqrt{\Var_{\mathrm{MC}}\!\big(U(S)\big)}
\;+\;
\lambda_{\max} L_{g'}\ \sqrt{\Psi(S)}\; \tilde G(S)\Big)^{\!2},
\end{aligned}
\end{equation*} 
where 
\begin{equation*}
\begin{aligned}
\Var_{\mathrm{MC}}(U)
\;\le\;
(\lambda_{\max}L_g)^2
\Big(\frac{\bar v_{\nabla}(S)}{m}
\;+\;\frac{m-1}{m}\ \bar c_{\nabla}(S)\Big),
\end{aligned}
\end{equation*}
and $\hat G(S):=\frac{1}{m}\sum_{i=1}^m \hat a_i \,\nabla_\theta \hat r_i$ is the stochastic gradient of the mini-batch $S$.

\end{restatedtheorem}

\begin{proof}
Let
\[
U(S):=\frac{1}{m}\sum_{i=1}^m a_i\,\nabla_\theta \hat r_i,\qquad
V(S):=\frac{1}{m}\sum_{i=1}^m (\hat a_i-a_i)\,\nabla_\theta \hat r_i,
\]
so that $\hat G=U+V$.
We also use $\xi_i:=\nabla_\theta \hat r_i-\nabla_\theta r_i$, so $\Emc[\xi_i]=0$.

\paragraph{(A) Relating $\Var_{\mathrm{MC}}(\hat G)$ to $\Var_{\mathrm{MC}}(U)$ and $\Var_{\mathrm{MC}}(V)$.}
By definition of vector variance,
\[
\Var_{\mathrm{MC}}(\hat G)
=\Emc\!\left[\left\|\hat G-\Emc[\hat G]\right\|_2^2\right].
\]
Since $\hat G=U+V$, write
\[
\hat G-\Emc[\hat G]=(U-\Emc[U])+(V-\Emc[V])=:A+B.
\]
Using the pointwise triangle inequality $\|A+B\|_2\le \|A\|_2+\|B\|_2$ and then squaring and taking expectation, we get
\[
\Emc\!\left[\|A+B\|_2^2\right]
\le \Emc\!\left[(\|A\|_2+\|B\|_2)^2\right]
= \Emc\!\left[\|A\|_2^2\right] + 2\,\Emc\!\left[\|A\|_2\|B\|_2\right]+\Emc\!\left[\|B\|_2^2\right].
\]
Apply Cauchy--Schwarz to the cross term:
\[
\Emc\!\left[\|A\|_2\|B\|_2\right]\ \le\ \sqrt{\Emc[\|A\|_2^2]}\ \sqrt{\Emc[\|B\|_2^2]}.
\]
Combining,
\[
\Emc\!\left[\|A+B\|_2^2\right]
\le \Big(\sqrt{\Emc[\|A\|_2^2]}+\sqrt{\Emc[\|B\|_2^2]}\Big)^2.
\]
Recognizing $\Emc[\|A\|_2^2]=\Var_{\mathrm{MC}}(U)$ and $\Emc[\|B\|_2^2]=\Var_{\mathrm{MC}}(V)$, we conclude
\begin{equation}\label{eq:sd-sum-derived}
\sqrt{\Var_{\mathrm{MC}}(\hat G)}\ \le\ \sqrt{\Var_{\mathrm{MC}}(U)}+\sqrt{\Var_{\mathrm{MC}}(V)}.
\end{equation}

\paragraph{(B) Bounding $\Var_{\mathrm{MC}}(V)$.}
By the Lipschitz property of $g'$,
\[
|\hat a_i-a_i|
=\lambda_i\,|g'(\hat\delta_i)-g'(\delta_i)|
\le \lambda_i L_{g'}\,|\hat\delta_i-\delta_i|.
\]
Therefore,
\begin{align*}
\Var_{\mathrm{MC}}(V)
&=\Var_{\mathrm{MC}}\!\Big(\frac{1}{m}\sum_i (\hat a_i-a_i)\,\nabla_\theta \hat r_i\Big)
\ \le\ \Emc\Big\|\frac{1}{m}\sum_i (\hat a_i-a_i)\,\nabla_\theta \hat r_i\Big\|_2^2\\
&=\frac{1}{m^2}\sum_{i,j}\Emc\!\big[(\hat a_i-a_i)(\hat a_j-a_j)\,\langle \nabla_\theta \hat r_i,\nabla_\theta \hat r_j\rangle\big]\\
&\le \frac{1}{m^2}\Big(\sum_i \sqrt{\Emc\!\big[(\hat a_i-a_i)^2\,\|\nabla_\theta \hat r_i\|_2^2\big]}\Big)^2
\quad\text{(Cauchy--Schwarz across $i$)}\\
&\le \frac{1}{m^2}\Big(\sum_i \sqrt{\Emc[(\hat a_i-a_i)^2]}\ \sqrt{\Emc\|\nabla_\theta \hat r_i\|_2^2}\Big)^2
\quad\text{(Cauchy--Schwarz in expectation)}\\
&\le \lambda_{\max}^2 L_{g'}^2
\Big(\frac{1}{m}\sum_i \sqrt{\Emc[(\hat\delta_i-\delta_i)^2]}\ \sqrt{\Emc\|\nabla_\theta \hat r_i\|_2^2}\Big)^2\\
&\le \lambda_{\max}^2 L_{g'}^2
\Big(\frac{1}{m}\sum_i \Emc[(\hat\delta_i-\delta_i)^2]\Big)
\Big(\frac{1}{m}\sum_i \Emc\|\nabla_\theta \hat r_i\|_2^2\Big)
\quad\text{(Cauchy--Schwarz on the index $i$)}\\
&= \lambda_{\max}^2 L_{g'}^2\ \Psi(S)\ \tilde G^2(S).
\end{align*}
Taking square roots yields
\begin{equation}\label{eq:V-sd}
\sqrt{\Var_{\mathrm{MC}}(V)}\ \le\ \lambda_{\max} L_{g'}\ \sqrt{\Psi(S)}\ \tilde G(S).
\end{equation}

\paragraph{(C) Bounding $\Var_{\mathrm{MC}}(U)$.}
Since $\Emc[\xi_i]=0$ and $U-\Emc[U]=\tfrac{1}{m}\sum_i a_i\,\xi_i$ with deterministic $a_i$,
\begin{equation}\label{eq:U-exact}
\Var_{\mathrm{MC}}(U)
=\Emc\Big\|\frac{1}{m}\sum_{i=1}^m a_i\,\xi_i\Big\|_2^2
=\frac{1}{m^2}\sum_{i,j=1}^m a_i a_j\,\Emc\big[\langle \xi_i,\xi_j\rangle\big].
\end{equation}
Using $|a_i|\le \lambda_{\max}|g'(\delta_i)|\le \lambda_{\max}L_g$ and splitting diagonal/off-diagonal terms,
\begin{align*}
\Var_{\mathrm{MC}}(U)
&\le \frac{(\lambda_{\max}L_g)^2}{m^2}
\bigg(\sum_{i=1}^m \Emc\|\xi_i\|_2^2+\sum_{i\ne j}\Emc\langle \xi_i,\xi_j\rangle\bigg)\\
&=(\lambda_{\max}L_g)^2
\bigg(\frac{1}{m}\cdot\frac{1}{m}\sum_{i=1}^m \Emc\|\xi_i\|_2^2
+\frac{m-1}{m}\cdot\frac{1}{m(m-1)}\sum_{i\ne j}\Emc\langle \xi_i,\xi_j\rangle\bigg)\\
&=(\lambda_{\max}L_g)^2
\Big(\frac{\bar v_{\nabla}(S)}{m}+\frac{m-1}{m}\ \bar c_{\nabla}(S)\Big),
\end{align*}
which is the claimed proxy bound.

\medskip
\textbf{ (D) Combine.}
From \eqref{eq:sd-sum-derived} and \eqref{eq:V-sd},
\[
\sqrt{\Var_{\mathrm{MC}}(\hat G(S))}
\ \le\
\sqrt{\Var_{\mathrm{MC}}(U(S))}\;+\;\lambda_{\max} L_{g'}\ \sqrt{\Psi(S)}\ \tilde G(S).
\]
Squaring both sides gives
\[
\Var_{\mathrm{MC}}(\hat G(S))
\ \le\
\Big(\sqrt{\Var_{\mathrm{MC}}(U(S))}+\lambda_{\max} L_{g'}\,\sqrt{\Psi(S)}\,\tilde G(S)\Big)^{\!2}.
\]
Together with the bound on $\Var_{\mathrm{MC}}(U)$ above, this establishes the theorem.
\end{proof}

For the special case of independent per–item MC seeds, 
we can assume that the per–item MC randomness is conditionally independent across items in the batch, so that $\Emc[\langle \xi_i,\xi_j\rangle]=0$ for $i\neq j$.
Then the variance of $U(S)$ collapses to the diagonal:
\begin{equation*}
\begin{aligned}
\Var_{\mathrm{MC}}\!\big(U(S)\big)
= \frac{1}{m^2}\sum_{i=1}^m a_i^2\,\Emc\!\big[\|\xi_i\|^2\big]
\;\le\;
(\lambda_{\max}L_g)^2\,\frac{\bar v_{\nabla}(S)}{m},
\end{aligned}
\end{equation*}
where $\bar v_{\nabla}(S)=\tfrac{1}{m}\sum_{i=1}^m \Emc\!\big[\|\xi_i\|^2\big]$.
Substituting into Theorem~\ref{thm:grad-var} gives
\begin{equation*}
\begin{aligned}
\Var_{\mathrm{MC}}\!\big(\hat G(S)\big)
\le
\Big((\lambda_{\max}L_g)\,\sqrt{\tfrac{\bar v_{\nabla}(S)}{m}}
\;+\;
\lambda_{\max} L_{g'}\ \sqrt{\Psi(S)}\;\tilde G(S)\Big)^{\!2}.
\end{aligned}
\end{equation*}

\subsection{Global Baseline Optimality}

\begin{restatedlemma}{lem:psi-diff}
For any baseline $b$ that is constant across items in a batch and may depend on MC randomness,
\[
\Psi_b(S)-\Psi_{\hat b_0}(S)
\;=\;
\Var_{\mathrm{MC}}\!\big(b-\hat b_0\big)
\;\ge\;0,
\]
where $\hat b_0=\tfrac{1}{m}\sum_j \hat r_j$. Hence every baseline of the form
$b=\hat b_0+K$ with deterministic constant $K$ attains the same minimum value
$\Psi_b(S)=\Psi_{\hat b_0}(S)$. If, in addition, $\Emc[b]=b_0$, then $K=0$
and the unique minimizer is $b=\hat b_0$.
\end{restatedlemma}

\begin{proof}
Let $\Delta:=b-\hat b_0$. Using $\Var(X-Y)=\Var X+\Var Y-2\Cov(X,Y)$ and $\Psi(S)$ from Lemma~\ref{lem:delta-i-variance},
\[
\Psi_b-\Psi_{\hat b_0}
=\frac{1}{m}\sum_i \Emc\!\big[\Var((\hat r_i-\hat b_0)-\Delta)\big]
-\frac{1}{m}\sum_i \Emc\!\big[\Var(\hat r_i-\hat b_0)\big].
\]
This equals
\[
\Var(\Delta)
-\frac{2}{m}\sum_{i=1}^m \Cov(\hat r_i-\hat b_0,\Delta)
=\Var(\Delta)-2\,\Cov(\hat b_0-\hat b_0,\Delta)
=\Var(\Delta)\ge 0,
\]
since $\frac{1}{m}\sum_i\Cov(\hat r_i,\Delta)=\Cov(\hat b_0,\Delta)$.
Equality holds iff $\Var(\Delta)=0$, i.e.\ $b-\hat b_0$ is a constant.
If moreover $\Emc[b]=b_0$, then that constant must be $0$, so $b=\hat b_0$.
\end{proof}

This result says that $\hat b_0$ is variance-optimal for all possible values of $b$. It requires no additional compute and is thus chosen as the principled default for ELBO-KTO.

\section{EXPERIMENTAL DETAILS}
\label{sec:appendix-c}
This section specifies datasets, models, preprocessing, optimization, compute, decoding, and evaluation protocols used in our experiments.

\subsection{Models and Tokenization}
\label{sec:models}
We fine-tune LLaDA-8B-Instruct as the policy and use a frozen copy of the same checkpoint as the reference $\pi_{\text{ref}}$ for ELBO-KTO. We apply the official chat template during both training and inference for consistency with the base model.\footnote{\url{https://huggingface.co/GSAI-ML/LLaDA-8B-Instruct}}

\subsection{Data and Preprocessing}
\label{sec:data}
\paragraph{Preference Data.}
We use kto-mix-14k and UltraFeedback-Binary for training. Each example is $(x,y,s)$ with $s\in\{+1,-1\}$ indicating desirable/undesirable. For UltraFeedback-Binary, we convert pairs to unpaired labels by taking the \emph{chosen} response as desirable and the \emph{rejected} response as undesirable. Unless noted, we train on the official train split and report on the test split.

\paragraph{Length and Formatting.}
We cap the concatenated prompt+response length at $L{=}4096$ tokens, pad with \texttt{|EOS|} to length 4096, and \emph{only} mask completion tokens during training. Because we fine-tune from LLaDA-8B-Instruct, the chat template is applied at train and inference time.

\paragraph{Licenses.}
Dataset licenses used in the paper are summarized in Table~\ref{tab:dataset-licenses}.

\begin{table}[t]
\centering
\caption{Licenses for datasets used in this work.}
\label{tab:dataset-licenses}
\begin{tabular}{@{}l l@{}}
\toprule
\textbf{Dataset} & \textbf{License}  \\
\midrule
kto-mix-14k & MIT \\
UltraFeedback-Binary & MIT\\
GSM8K & MIT  \\
HumanEval & MIT \\
MMLU & MIT \\
GPQA & CC-BY-4.0  \\
HellaSwag & MIT \\
\bottomrule
\end{tabular}
\end{table}

\subsection{Training}
\label{sec:training}

\paragraph{Optimizer and Schedule.}
We train for one epoch with batch size $8$ using AdamW (weight decay $0.01$, $\beta_1{=}0.9$, $\beta_2{=}0.95$), a $3\%$ linear warmup, and cosine decay.

\paragraph{Learning Rate and MC Budget.}
For kto-mix-14k: peak learning-rate $1{\times}10^{-6}$ with $8$ MC samples per example. 
For UltraFeedback-Binary (UFB): peak learning-rate $5{\times}10^{-7}$ with $4$ MC samples per example. The same UFB-trained checkpoint is used for downstream generalization experiments in Section~\ref{sec:downstream-generaliztion-results}.

\paragraph{Hyperparameters.} For the results reported in Tables~\ref{tab:winrates}-\ref{tab:baseline-ablation} we set $\lambda_D=\lambda_U=1$. For results in Tables~\ref{tab:winrates}-\ref{tab:judge-robustness}, we set $\beta=0.1$ for kto-mix-14k and $\beta=0.2$ for UltraFeedback-Binary. For the ablation study in Table~\ref{tab:baseline-ablation}, we use kto-mix-14k and report the values of $\beta$ and learning-rate in the table. For downstream results in Section~\ref{sec:downstream-generaliztion-results}, we use the ELBO-KTO model trained on UltraFeedback-Binary used in Section~\ref{sec:overall-results}.

\paragraph{Reference Precompute.}
For memory efficiency during training we precompute $\widehat{B}_{\pi_{\text{ref}}}$ once and cache the mask metadata, so we do not keep both the policy and reference models resident simultaneously. This choice does not alter the ELBO-KTO objective or estimator; it only reduces memory/compute overhead. Implementations with sufficient memory can compute $\widehat{B}_{\pi_{\text{ref}}}$ on-the-fly by holding both models at once and will obtain identical updates (we share the same masks via cached metadata either way). 

\paragraph{Compute.}
Unless specified, experiments run on $8\times$ NVIDIA H100 80GB with FSDP (no offload) and per-GPU microbatch $=1$ (global batch $=8$). Typical wall-clock: precompute $\widehat{B}_{\pi_{\text{ref}}}$ on ${\sim}13.5$k samples $\approx 37$ min; training after precompute $\approx 172$ min. We include precompute timings only for reproducibility/accounting, not as a core component of the method.

\subsection{Decoding and Evaluation Protocol}
\label{sec:decoding}

Unless noted otherwise, we use temperature $0.0$, classifier-free guidance $0.0$, and low-confidence remasking strategy for all evaluation purposes as recommended by official LLaDA evaluation script\footnote{\href{https://github.com/ML-GSAI/LLaDA}{https://github.com/ML-GSAI/LLaDA}}. For Sections~\ref{sec:overall-results}-~\ref{sec:class-imbalance-results}, we use generation length $512$, block length $32$, $512$ diffusion steps. For evaluation on downstream tasks in Section~\ref{sec:downstream-generaliztion-results}, task-specific parameters are in Table~\ref{tab:eval-config}. We evaluate LLaDA-based instruction-tuned models with conditional generation, following \cite{nie2025largelanguagediffusionmodels}. Because their public repo lacks scripts for conditional evaluation of LLaDA-8B-Instruct, we implement an evaluation script on top of \texttt{lm-evaluation-harness}\footnote{\href{https://github.com/EleutherAI/lm-evaluation-harness}{https://github.com/EleutherAI/lm-evaluation-harness}}.

\begin{table}[h]
  \centering
  \caption{Evaluation configuration per task (gen length, block length, diffusion steps, \#few-shot).}
  \label{tab:eval-config}
  \begin{tabular}{@{}lcccc@{}}
    \toprule
    Task & Gen Len & Block Len & Steps & Few-shot \\
    \midrule
    GSM8K & 256 & 8 & 256 & 5 \\
    MMLU & 3 & 3 & 3 & 5 \\
    HellaSwag & 3 & 3 & 3 & 0 \\
    HumanEval & 512 & 512 & 32 & 0 \\
    GPQA & 128 & 128 & 64 & 5 \\
    \bottomrule
  \end{tabular}
\end{table}

\subsection{Win-Rate, Majority-Vote and Cohen's $\kappa$}
\label{sec:judge}
\paragraph{Win-rate protocol.}
We report \emph{Adjusted Win Rate} (AWR) of the tuned model versus the base model using an open-source LLM-as-a-judge (FastChat \texttt{lm\_judge}).\footnote{\href{https://github.com/lm-sys/FastChat}{https://github.com/lm-sys/FastChat}}
For each prompt $x$, we generate two completions under identical decoding: $y^{(A)}$ from the tuned model and $y^{(B)}$ from the base model.
The judge receives $(x, y^{(A)}, y^{(B)})$ with a standardized instruction template and returns a label in $\{\text{A wins}, \text{B wins}, \text{tie}\}$.
To mitigate position bias, we evaluate \emph{both} orderings, $(x, y^{(A)}, y^{(B)})$ and $(x, y^{(B)}, y^{(A)})$, and declare a win for a model only if \emph{both} orderings favor the same model; otherwise the outcome is counted as a tie.
We then compute
\[
\mathrm{AWR}
\;=\;
\frac{\#\text{A wins} + 0.5\,\#\text{ties}}{\#\text{A wins} + \#\text{B wins} + \#\text{ties}}\,,
\]
and report the mean AWR over the full test set.

\paragraph{Two-judge majority vote and agreement.}
We also report AWR under a majority vote of two independent judges.
The same two-ordering rule is applied per judge; the ensemble outcome is a win for a model only if \emph{both} judges, under \emph{both} orderings, favor that model; otherwise it is treated as a tie.
Inter-judge agreement is measured with Cohen’s $\kappa$, treating win, loss, and tie as three distinct classes.

\paragraph{Uncertainty.}
We report $90\%$ confidence intervals via nonparametric bootstrapping with $5{,}000$ resamples of the test set (sampling prompts with replacement and recomputing metrics per resample); intervals are taken from the $5^\text{th}$ and $95^\text{th}$ percentiles.

\end{document}